\def\eqref#1{equation~\ref{#1}}
\def\1{\bm{1}}
\DeclareMathAlphabet{\mathsfit}{\encodingdefault}{\sfdefault}{m}{sl}
\SetMathAlphabet{\mathsfit}{bold}{\encodingdefault}{\sfdefault}{bx}{n}
\newcommand{\tens}[1]{\bm{\mathsfit{#1}}}
\DeclareMathOperator{\rank}{rank}
\DeclareMathOperator{\dotavec}{vec}
\newtheorem{lemma}{Lemma}[section]
\newtheorem{theorem}{Theorem}[section]
\newtheorem{statement}{Proposition}[section]
\renewcommand{\tens}[1]{\boldsymbol{\mathcal{#1}}}
\let\oldproofname=\proofname
\renewcommand{\proofname}{\rm\bf{\oldproofname}}
\newcommand{\rom}[1]{\uppercase\expandafter{\romannumeral #1\relax}}
\title{Generalized Tensor Models for Recurrent Neural Networks}
\author{Valentin Khrulkov\textsuperscript{\normalfont 1}, Oleksii Hrinchuk\textsuperscript{\normalfont 1,2} \& Ivan Oseledets\textsuperscript{\normalfont 1,3} \\
\texttt{\{valentin.khrulkov, oleksii.hrinchuk, i.oseledets\}@skoltech.ru} \\
\textsuperscript{1}Skolkovo Institute of Science and Technology, Moscow, Russia \\ \textsuperscript{2}Moscow Institute of Physics and Technology, Moscow, Russia \\ \textsuperscript{3}Institute of Numerical Mathematics, Russian Academy of Sciences, Moscow, Russia 
}
\begin{document}

\maketitle

\begin{abstract}
Recurrent Neural Networks (RNNs) are very successful at solving challenging problems with sequential data. However, this observed efficiency is not yet entirely explained by theory. It is known that a certain class of multiplicative RNNs enjoys the property of depth efficiency --- a shallow network of exponentially large width is necessary to realize the same score function as computed by such an RNN. Such networks, however, are not very often applied to real life tasks. In this work, we attempt to reduce the gap between theory and practice by extending the theoretical analysis to RNNs which employ various nonlinearities, such as Rectified Linear Unit (ReLU), and show that they also benefit from properties of universality and depth efficiency. Our theoretical results are verified by a series of extensive computational experiments.
\end{abstract}

\section{Introduction}

Recurrent Neural Networks are firmly established to be one of the best deep learning techniques when the task at hand requires processing sequential data, such as text, audio, or video~\citep{graves2013speech, mikolov2011extensions, gers1999learning}. The ability of these neural networks to efficiently represent a rich class of functions with a relatively small number of parameters is often referred to as \emph{depth efficiency}, and the theory behind this phenomenon is not yet fully understood.  A recent line of work~\citep{cohen2016convolutional, cohen2016expressive, khrulkov2018expressive, cohen2018boosting} focuses on comparing various deep learning architectures in terms of their \emph{expressive power}. 
	
It was shown in~\citep{cohen2016expressive} that ConvNets with product pooling are \emph{exponentially} more expressive than shallow networks, that is there exist functions realized by ConvNets which require an exponentially large number of parameters in order to be realized by shallow nets. A similar result also holds for RNNs with multiplicative recurrent cells~\citep{khrulkov2018expressive}. We aim to extend this analysis to RNNs with rectifier nonlinearities which are often used in practice. The main challenge of such analysis is that the tools used for analyzing multiplicative networks, namely, properties of standard \emph{tensor decompositions} and ideas from algebraic geometry, can not be applied in this case, and thus some other approach is required. Our objective is to apply the machinery of \emph{generalized tensor decompositions}, and show universality and existence of depth efficiency in such RNNs.
\section{Related work}

Tensor methods have a rich history of successful application in machine learning. \citep{vasilescu2002multilinear}, in their framework of TensorFaces, proposed to treat facial image data as multidimensional arrays and analyze them with tensor decompositions, which led to significant boost in face recognition accuracy. \citep{bailey2017word} employed higher-order co-occurence data and tensor factorization techniques to improve on word embeddings models. Tensor methods also allow to produce more accurate and robust recommender systems by taking into account a multifaceted nature of real environments~\citep{frolov2017tensor}.

In recent years a great deal of work was done in applications of tensor calculus to both theoretical and practical aspects of deep learning algorithms. \citep{lebedev2015speeding} represented filters in a convolutional network with CP decomposition~\citep{harshman1970foundations,carroll1970analysis} which allowed for much faster inference at the cost of a negligible drop in performance. \citep{novikov2015tensorizing} proposed to use Tensor Train (TT) decomposition~\citep{oseledets2011tensor} to compress fully--connected layers of large neural networks while preserving their expressive power. Later on, TT was exploited to reduce the number of parameters and improve the performance of recurrent networks in long--term forecasting~\citep{yu2017long} and video classification~\citep{yang2017tensor} problems. 

In addition to the practical benefits, tensor decompositions were used to analyze theoretical aspects of deep neural nets. \citep{cohen2016expressive} investigated a connection between various network architectures and tensor decompositions, which made possible to compare their expressive power. Specifically, it was shown that CP and Hierarchial Tucker~\citep{grasedyck2010hierarchical} decompositions correspond to shallow networks and convolutional networks respectively. Recently, this analysis was extended by~\citep{khrulkov2018expressive} who showed that TT decomposition can be represented as a recurrent network with multiplicative connections. This specific form of RNNs was also empirically proved to provide a substantial performance boost over standard RNN models~\citep{wu2016multiplicative}.

First results on the connection between tensor decompositions and neural networks were obtained for rather simple architectures, however, later on, they were extended in order to analyze more practical deep neural nets. It was shown that theoretical results can be generalized to a large class of CNNs with ReLU nonlinearities~\citep{cohen2016convolutional} and dilated convolutions~\citep{cohen2018boosting}, providing valuable insights on how they can be improved. However, there is a missing piece in the whole picture as theoretical properties of more complex nonlinear RNNs have yet to be analyzed. In this paper, we elaborate on this problem and present new tools for conducting a theoretical analysis of such RNNs, specifically when rectifier nonlinearities are used. 

\section{Architectures inspired by tensor decompositions}

Let us now recall the known results about the connection of tensor decompositions and multiplicative architectures, and then show how they are generalized in order to include networks with ReLU nonlinearities.

\subsection{Score functions and feature tensor}

Suppose that we are given a dataset of objects with a sequential structure, i.e. every object in the dataset can be written as 
\begin{equation}\label{eq:rep}
X = \left( \mathbf{x}^{(1)}, \mathbf{x}^{(2)}, \dots ,\mathbf{x}^{(T)} \right), \quad \mathbf{x}^{(t)} \in \mathbb{R}^N.
\end{equation}
We also introduce a parametric \emph{feature map} $f_{\theta}: \mathbb{R}^N \to \mathbb{R}^M$ which essentially preprocesses the data before it is fed into the network. Assumption~\ref{eq:rep} holds for many types of data, e.g. in the case of natural images we can cut them into rectangular patches which are then arranged into vectors $\mathbf{x}^{(t)}$. A typical choice for the feature map $f_{\theta}$ in this particular case is an affine map followed by a nonlinear activation: $f_{\theta}(\mathbf{x}) = \sigma(\mathbf{Ax+b})$. To draw the connection between tensor decompositions and feature tensors we consider the following \emph{score functions} (logits\footnote{By \textit{logits} we mean immediate outputs of the last hidden layer before applying nonlinearity. This term is adopted from classification tasks where neural network usually outputs \textit{logits} and following softmax nonlinearity transforms them into valid probabilities.}):
\begin{equation}\label{eq:score-func}
\ell(X) = \langle \tens{W}, \mathbf{\Phi}(X) \rangle = \left(\dotavec\tens{W}\right)^\top \dotavec  \mathbf{\Phi}(X) , 
\end{equation}
where $\tens{W} \in \mathbb{R}^{M \times M \times \dots \times M}$ is a trainable $T$--way weight tensor and $\mathbf{\Phi}(X) \in \mathbb{R}^{M \times M \times \dots \times M}$ is a rank~1 \emph{feature tensor}, defined as
\begin{equation}\label{eq:feature-tensor-def}
\mathbf{\Phi}(X) = f_{\theta}(\mathbf{x}^{(1)}) \otimes f_{\theta}(\mathbf{x}^{(2)}) \dots \otimes f_{\theta}(\mathbf{x}^{(T)}),
\end{equation}
where we have used the operation of outer product $\otimes$, which is important in tensor calculus.  For a tensor $\tens{A}$ of order $N$ and a tensor $\tens{B}$ of order $M$ their outer product $\tens{C} = \tens{A} \otimes \tens{B}$ is a tensor of order $N + M$ defined as:
\begin{equation}\label{eq:kron-prod}
\tens{C}_{i_1 i_2 \dots i_N j_1 j_2 \dots j_M} = \tens{A}_{i_1 i_2 \cdots i_N} \tens{B}_{j_1 j_2 \cdots j_M}.
\end{equation}

It is known that~\eqref{eq:score-func} possesses the \emph{universal approximation property} (it can approximate any function with any prescribed precision given sufficiently large $M$) under mild assumptions on $f_{\theta}$~\citep{cohen2016expressive,girosi1990networks}.
	
\subsection{Tensor Decompositions}

Working the entire weight tensor $\tens{W}$ in~\cref{eq:score-func} is impractical for large $M$ and $T$, since it requires exponential in $T$ number of parameters. Thus, we compactly represent it using \emph{tensor decompositions}, which will further lead to different neural network architectures, referred to as \textit{tensor networks}~\citep{cichocki2017tensor}.

\paragraph{CP-decomposition} The most basic decomposition is the so-called Canonical (CP) decomposition~\citep{harshman1970foundations,carroll1970analysis} which is defined as follows
\begin{equation}\label{eq:canonical}
\tens{W} = \sum_{r=1}^{R} \lambda_{r} \mathbf{v}_{r}^{(1)} \otimes \mathbf{v}_{r}^{(2)} \dots \otimes \mathbf{v}_{r}^{(T)},
\end{equation}
where $\mathbf{v}_{r}^{(t)} \in \mathbb{R}^M$ and minimal value of $R$ such that decomposition~\eqref{eq:canonical} exists is called \emph{canonical rank of a tensor (CP--rank)}. By substituting \cref{eq:canonical} into \cref{eq:score-func} we find that
\begin{equation}\label{eq:score_canonical}
\ell(X) = \sum_{r=1}^{R} \lambda_{r} \left[ \langle f_{\theta}(\mathbf{x}^{(1)}),  \mathbf{v}_{r}^{(1)} \rangle \otimes \dots \otimes \langle f_{\theta}(\mathbf{x}^{(T)}),  \mathbf{v}_{r}^{(T)} \rangle\right] = \sum_{r=1}^{R} \lambda_{r} \prod_{t=1}^{T} \langle f_{\theta}(\mathbf{x}^{(t)}),  \mathbf{v}_{r}^{(t)} \rangle.
\end{equation}
In the equation above, outer products $\otimes$ are taken between scalars and coincide with the ordinary products between two numbers. However, we would like to keep this notation as it will come in handy later, when we generalize tensor decompositions to include various nonlinearities.

\paragraph{TT-decomposition} Another tensor decomposition is Tensor Train (TT) decomposition~\citep{oseledets2011tensor} which is defined as follows
\begin{equation}\label{eq:tensor_train}
\tens{W} = \sum_{r_1=1}^{R_1} \dots \sum_{r_{T-1}=1}^{R_{T-1}} \mathbf{g}_{r_0 r_1}^{(1)} \otimes \mathbf{g}_{r_1 r_2}^{(2)} \otimes \dots \otimes \mathbf{g}_{r_{T-1} r_T}^{(T)},
\end{equation}
where $\mathbf{g}_{r_{t-1} r_t}^{(t)} \in \mathbb{R}^M$ and $r_0=r_T=1$ by definition. If we gather vectors $\mathbf{g}_{r_{t-1}r_t}^{(t)}$ for all corresponding indices $r_{t-1}\in\{1,\dots,R_{t-1}\}$ and $r_t \in \{1,\dots,R_t\}$ we will obtain three--dimensional tensors $\tens{G}^{(t)} \in \mathbb{R}^{M \times R_{t-1} \times R_t}$ (for $t=1$ and $t=T$ we will get matrices $\tens{G}^{(1)} \in \mathbb{R}^{M \times 1 \times R_1}$ and $\tens{G}^{(T)} \in \mathbb{R}^{M \times R_{T-1} \times 1}$). The set of all such tensors $\{\tens{G}^{(t)}\}_{t=1}^T$ is called \textit{TT--cores} and minimal values of $\{R_t\}_{t=1}^{T-1}$ such that decomposition \eqref{eq:tensor_train} exists are called \textit{TT--ranks}. In the case of TT decomposition, the score function has the following form:
\begin{equation}\label{eq:score_tt}
\ell(X) = \sum_{r_1=1}^{R_1} \dots \sum_{r_{T-1}=1}^{R_{T-1}} \prod_{t=1}^T \langle f_\theta(\mathbf{x}^{(t)}),\mathbf{g}_{r_{t-1}r_t}^{(t)} \rangle.
\end{equation}
	
\subsection{Connection between TT and RNN}
	
Now we want to show that the score function for Tensor Train decomposition exhibits particular recurrent structure similar to that of RNN. We define the following \textit{hidden states}:
\begin{equation}\label{eq:tt_rnn}
\begin{aligned}
& \mathbf{h}^{(1)} \in \mathbb{R}^{R_1}: \mathbf{h}_{r_1}^{(1)} = \langle f_\theta(\mathbf{x}^{(1)}),\mathbf{g}_{r_0 r_1}^{(1)} \rangle , \\
& \mathbf{h}^{(t)} \in \mathbb{R}^{R_t}: \mathbf{h}_{r_t}^{(t)} = \sum_{r_{t-1}=1}^{R_{t-1}} \langle f_\theta(\mathbf{x}^{(t)}),\mathbf{g}_{r_{t-1}r_t}^{(t)}\rangle \mathbf{h}^{(t-1)}_{r_{t-1}} \quad t=2, \dots, T.
\end{aligned}
\end{equation}
Such definition of hidden states allows for more compact form of the score function.
\begin{restatable}{lemma}{firstlemma}\label{lemma:tt_score_function}
Under the notation introduced in \cref{eq:tt_rnn}, the score function can be written as 
\begin{equation*}
\ell(X)=\mathbf{h}^{(T)} \in \mathbb{R}^{1}.
\end{equation*}
\end{restatable}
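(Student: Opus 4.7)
The statement is essentially an unrolling identity, so my plan is a straightforward induction on $t$ showing that, for every $t \in \{1, \ldots, T\}$ and every $r_t \in \{1, \ldots, R_t\}$,
\begin{equation*}
\mathbf{h}^{(t)}_{r_t} \;=\; \sum_{r_1=1}^{R_1} \cdots \sum_{r_{t-1}=1}^{R_{t-1}} \prod_{s=1}^{t} \langle f_\theta(\mathbf{x}^{(s)}), \mathbf{g}_{r_{s-1} r_s}^{(s)} \rangle.
\end{equation*}
Once this formula is established, specializing to $t = T$ and using the convention $r_T = 1$ (so that $\mathbf{h}^{(T)} \in \mathbb{R}^{1}$ has a single component) gives exactly the right-hand side of the score expression \eqref{eq:score_tt}, which is $\ell(X)$.

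For the base case $t = 1$, the claimed formula reduces to $\mathbf{h}^{(1)}_{r_1} = \langle f_\theta(\mathbf{x}^{(1)}), \mathbf{g}_{r_0 r_1}^{(1)} \rangle$, which is precisely the first line of \eqref{eq:tt_rnn} (with the empty product/sum conventions being trivial since $r_0 = 1$ is fixed). For the inductive step, assuming the formula holds at index $t-1$, I plug the induction hypothesis into the recurrence in the second line of \eqref{eq:tt_rnn}:
\begin{equation*}
\mathbf{h}^{(t)}_{r_t} = \sum_{r_{t-1}=1}^{R_{t-1}} \langle f_\theta(\mathbf{x}^{(t)}), \mathbf{g}_{r_{t-1} r_t}^{(t)} \rangle \left( \sum_{r_1, \ldots, r_{t-2}} \prod_{s=1}^{t-1} \langle f_\theta(\mathbf{x}^{(s)}), \mathbf{g}_{r_{s-1} r_s}^{(s)} \rangle \right),
\end{equation*}
and then pull the outer inner-product factor inside the joint sum over $r_1, \ldots, r_{t-1}$ and absorb it as the $s=t$ term of the product. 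This is purely a rearrangement of finite sums and products, so it is justified without any convergence issue.

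There is no real obstacle here; the only things to be careful about are the boundary conventions $r_0 = r_T = 1$ (so the sums over $r_0$ and $r_T$ in \eqref{eq:score_tt} are trivial and do not appear in the recurrence) and the identification of the single-entry vector $\mathbf{h}^{(T)} \in \mathbb{R}^{1}$ with its scalar value. With those in place, the final step is simply to set $t = T$ in the inductive formula and observe that the resulting expression matches \eqref{eq:score_tt} term by term, which yields $\ell(X) = \mathbf{h}^{(T)}$ as claimed.
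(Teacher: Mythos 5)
Your proof is correct and is essentially the same argument as the paper's: the paper rewrites $\ell(X)$ from \eqref{eq:score_tt} by successively absorbing the innermost sums into $\mathbf{h}^{(1)}, \mathbf{h}^{(2)}, \dots$, which is exactly your forward induction on $t$ presented as a chain of equalities rather than as an explicit induction hypothesis. Both handle the boundary conventions $r_0=r_T=1$ in the same way, so there is nothing to add.
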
	
Proof of~\Cref{lemma:tt_score_function} as well as the proofs of our main results from~\Cref{sec:main_results} were moved to~\Cref{sec:appendix} due to limited space.

Note that with a help of TT--cores we can rewrite \cref{eq:tt_rnn} in a more convenient index form:
\begin{equation}\label{eq:hidden_state_index_form}
\mathbf{h}^{(t)}_k = \sum_{i,j} \tens{G}^{(t)}_{ijk}\; f_\theta(\mathbf{x}^{(t)})_i^{\ }\; \mathbf{h}_j^{(t-1)} = \sum_{i,j} \tens{G}^{(t)}_{ijk} \left[ f_{\theta}(\mathbf{x}^{(t)}) \otimes \mathbf{h}^{(t-1)} \right]_{ij}, \quad k = 1, \dots, R_t,
\end{equation}
where the operation of tensor contraction is used. Combining all weights from $\tens{G}^{(t)}$ and $f_\theta(\cdot)$ into a single variable $\Theta_{\tens{G}}^{(t)}$ and denoting the composition of feature map, outer product, and contraction as $g:\mathbb{R}^{R_{t-1}} \times \mathbb{R}^{N} \times \mathbb{R}^{N \times R_{t-1} \times R_t} \rightarrow \mathbb{R}^{R_t}$ we arrive at the following vector form:
\begin{equation}
\mathbf{h}^{(t)} = g(\mathbf{h}^{(t-1)}, \mathbf{x}^{(t)}; \Theta_{\tens{G}}^{(t)}), \quad \mathbf{h}^{(t)} \in \mathbb{R}^{R_t}.
\end{equation}
This equation can be considered as a generalization of hidden state equation for Recurrent Neural Networks as here all hidden states $\mathbf{h}^{(t)}$ may in general have different dimensionalities and weight tensors $\Theta_{\tens{G}}^{(t)}$ depend on the time step. However, if we set $R=R_1=\dots=R_{T-1}$ and $\tens{G} = \tens{G}^{(2)} = \dots = \tens{G}^{(T-1)}$ we will get simplified hidden state equation used in standard recurrent architectures:
\begin{equation}
\mathbf{h}^{(t)} = g(\mathbf{h}^{(t-1)}, \mathbf{x}^{(t)}; \Theta_{\tens{G}}), \quad \mathbf{h}^{(t)} \in \mathbb{R}^R, \quad t=2, \dots, T-1.
\end{equation}
Note that this equation is applicable to all hidden states except for the first $\mathbf{h}^{(1)} = \tens{G}^{(1)} f_\theta(\mathbf{x}^{(1)})$ and for the last $\mathbf{h}^{(T)} = f_\theta^\top(\mathbf{x}^{(T)}) \tens{G}^{(T)} \mathbf{h}^{(T-1)}$, due to two--dimensional nature of the corresponding TT--cores. However, we can always pad the input sequence with two auxiliary vectors $\mathbf{x}^{(0)}$ and $\mathbf{x}^{(T+1)}$ to get full compliance with the standard RNN structure. Figure~\ref{tensor_nets_architectures} depicts tensor network induced by TT decomposition with cores $\{\tens{G}^{(t)}\}_{t=1}^T$. 

\begin{figure}[ht]
\centering
\includegraphics[width=0.8\textwidth]{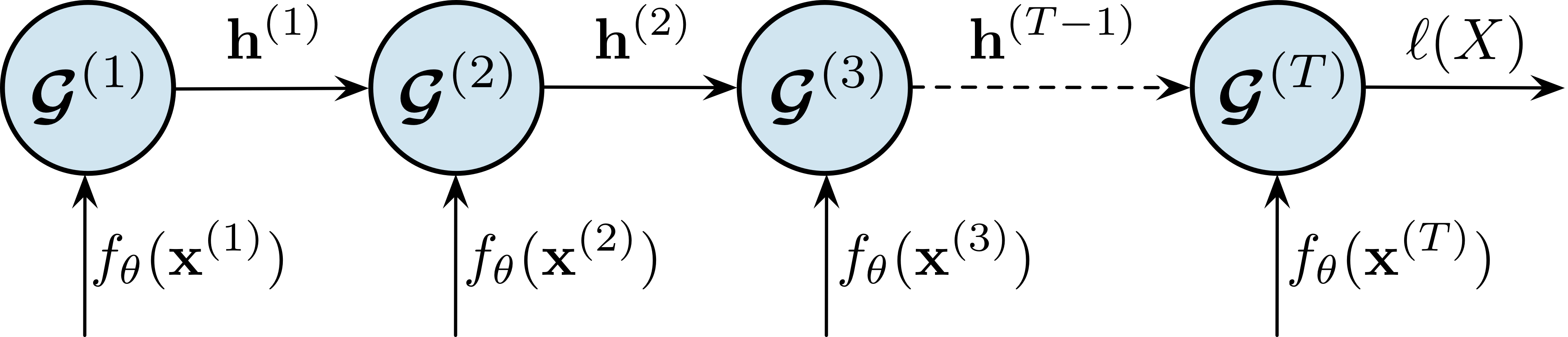}
\caption{Neural network architecture which corresponds to recurrent TT--Network.}
\label{tensor_nets_architectures}
\end{figure}

\section{Generalized tensor networks}\label{sec:generalized-networks}
    
\subsection{Generalized outer product}
	
In the previous section we showed that tensor decompositions correspond to neural networks of specific structure, which are simplified versions of those used in practice as they contain multiplicative nonlinearities only. One possible way to introduce more practical nonlinearities is to replace outer product $\otimes$ in \cref{eq:score_canonical} and \cref{eq:hidden_state_index_form} with a generalized operator $\otimes_{{\xi}}$ in analogy to kernel methods when scalar product is replaced by nonlinear kernel function. Let ${\xi} : \mathbb{R} \times \mathbb{R} \rightarrow \mathbb{R}$ be an associative and commutative binary operator ($\forall x,y,z \in \mathbb{R}: {\xi}({\xi}(x,y),z) = {\xi}(x,{\xi}(y,z))$ and $\forall x,y \in \mathbb{R}: {\xi}(x,y) = {\xi}(y,x)$). Note that this operator easily generalizes to the arbitrary number of operands due to associativity.  For a tensor $\tens{A}$ of order $N$ and a tensor $\tens{B}$ of order $M$ we define their generalized outer product $\tens{C} = \tens{A} \otimes_{\xi} \tens{B}$ as an $(N + M)$ order tensor with entries given by:
\begin{equation}
\tens{C}_{i_1 \dots i_N j_1 \dots j_M} = {\xi} \left( \tens{A}_{i_1 \dots i_N}, \tens{B}_{j_1 \dots j_M}\right).
\end{equation}
Now we can replace $\otimes$ in~\cref{eq:hidden_state_index_form,eq:score_canonical} with $\otimes_{\xi}$ and get networks with various nonlinearities. For example, if we take ${\xi}(x,y) = \max(x,y,0)$ we will get an RNN with rectifier nonlinearities; if we take ${\xi}(x,y)=\ln(e^x + e^y)$ we will get an RNN with softplus nonlinearities; if we take ${\xi}(x,y)=xy$ we will get a simple RNN defined in the previous section. Concretely, we will analyze the following networks.

\newpage
\paragraph{Generalized shallow network with ${\boldsymbol\xi}$--nonlinearity}
\begin{itemize}
\item Score function:
\begin{equation}\label{eq:cp-k-nonlinear}
\begin{aligned}
\ell(X) & = \sum_{r=1}^{R} \lambda_{r} \left[ \langle f_{\theta}(\mathbf{x}^{(1)}),  \mathbf{v}_{r}^{(1)} \rangle \otimes_{\xi} \dots \otimes_{\xi} \langle f_{\theta}(\mathbf{x}^{(T)}),  \mathbf{v}_{r}^{(T)} \rangle\right] \\
& = \sum_{r=1}^{R} \lambda_{r} \xi \left(\langle f_{\theta}(\mathbf{x}^{(1)}),\mathbf{v}_{r}^{(1)} \rangle, \dots, \langle f_{\theta}(\mathbf{x}^{(T)}),\mathbf{v}_{r}^{(T)} \rangle \right)
\end{aligned}
\end{equation}
\item Parameters of the network:
\begin{equation}\label{eq:cp-k-params}
\Theta = \left( \lbrace \lambda_{r} \rbrace_{r=1}^{R} \in \mathbb{R}, \lbrace \mathbf{v}_{r}^{(t)} \rbrace_{r=1, t=1}^{R, T} \in \mathbb{R}^{M} \right)
\end{equation}
\end{itemize}

\paragraph{Generalized RNN with ${\boldsymbol\xi}$--nonlinearity}
\begin{itemize}
\item Score function: \begin{equation}\label{eq:hidden_state_nonlinear}
\begin{aligned}
\mathbf{h}^{(t)}_k = \sum_{i,j} \tens{G}^{(t)}_{ijk} \left[ \mathbf{C}^{(t)} f_{\theta}(\mathbf{x}^{(t)}) \otimes_{\xi} \mathbf{h}^{(t-1)} \right]_{ij}
&=                         
\sum_{i,j} \tens{G}^{(t)}_{ijk}\; \xi\left([\mathbf{C }^{(t)}f_\theta(\mathbf{x}^{(t)})]^{\ }_i, \mathbf{h}^{(t-1)}_j\right) \\
\ell(X) &= \mathbf{h}^{(T)}
\end{aligned}
\end{equation}
\item Parameters of the network:
\begin{equation}\label{eq:tt-k-params}
\Theta = \left(\lbrace \mathbf{C}^{(t)} \rbrace_{t=1}^T \in \mathbb{R}^{L \times M}, \lbrace \tens{G}^{(t)} \rbrace_{t=1}^T \in \mathbb{R}^{L \times R_{t-1} \times R_{t}} \right) 
\end{equation}
\end{itemize}

Note that in \cref{eq:hidden_state_nonlinear} we have introduced the matrices $\mathbf{C}^{(t)}$ acting on the input states. The purpose of this modification is to obtain the plausible property of generalized shallow networks being able to be represented as generalized RNNs of width $1$ (i.e., with all $R_i = 1$) for an arbitrary nonlinearity $\xi$. In the case of $\xi(x, y) = xy$, the matrices $\mathbf{C}^{(t)}$ were not necessary, since they can be simply absorbed by $\tens{G}^{(t)}$ via tensor contraction (see \Cref{sec:appendix} for further clarification on these points).

\paragraph{Initial hidden state}
Note that generalized RNNs require some choice of the initial hidden state $\mathbf{h}^{(0)}$. We find that it is convenient both for theoretical analysis and in practice to initialize $\mathbf{h}^{(0)}$ as \emph{unit} of the operator $\xi$, i.e. such an element $u$ that $\xi(x, y, u) = \xi(x, y)\;\forall x, y \in \mathbb{R}$. Henceforth, we will assume that such an element exists (e.g., for $\xi(x, y) = \max(x, y, 0)$ we take $u=0$, for $\xi(x, y) = xy$ we take $u=1$), and set $\mathbf{h}^{(0)} = u$. For example, in \cref{eq:tt_rnn} it was implicitly assumed that $\mathbf{h}^{(0)}=1$.

\subsection{Grid tensors}\label{sec:grid-tensors}
	
Introduction of generalized outer product allows us to investigate RNNs with wide class of nonlinear activation functions, especially ReLU. While this change looks appealing from the practical viewpoint, it complicates following theoretical analysis, as the transition from obtained networks back to tensors is not straightforward. 
    
In the discussion above, every tensor network had corresponding weight tensor $\tens{W}$ and we could compare expressivity of associated score functions by comparing some properties of this tensors, such as ranks~\citep{khrulkov2018expressive,cohen2016expressive}. This method enabled comprehensive analysis of score functions, as it allows us to calculate and compare their values for all possible input sequences $X = \left( \mathbf{x}^{(1)}, \dots, \mathbf{x}^{(T)}\right)$. Unfortunately, we can not apply it in case of generalized tensor networks, as the replacement of standard outer product $\otimes$ with its generalized version $\otimes_\xi$ leads to the loss of conformity between tensor networks and weight tensors. Specifically, not for every generalized tensor network with corresponding score function $\ell(X)$ now exists a weight tensor $\tens{W}$ such that $\ell(X) = \langle \tens{W}, \mathbf{\Phi}(X) \rangle$. Also, such properties as \emph{universality} no longer hold automatically and we have to prove them separately. Indeed as it was noticed in \citep{cohen2016convolutional} shallow networks with $\xi(x,y) = \max(x, 0) + \max(y, 0)$ no longer have the universal approximation property. In order to conduct proper theoretical analysis, we adopt the apparatus of so-called \textit{grid tensors}, first introduced in~\citep{cohen2016convolutional}. 

Given a set of fixed vectors $\mathbb{X} = \left\{ \mathbf{x}^{(1)},\dots,\mathbf{x}^{(M)} \right\}$ referred to as \textit{templates}, the grid tensor of $\mathbb{X}$ is defined to be the tensor of order $T$ and dimension $M$ in each mode, with entries given by:
\begin{equation}
\boldsymbol{\Gamma}^{\ell}(\mathbb{X})_{i_1 i_2\dots i_T} = \ell \left( X \right), \quad X = \left( \mathbf{x}^{(i_1)}, \mathbf{x}^{(i_2)}, \dots ,\mathbf{x}^{(i_T)} \right),
\end{equation}
where each index $i_t$ can take values from $\{ 1, \dots, M\}$, i.e. we evaluate the score function on every possible input assembled from the template vectors $\lbrace \mathbf{x}^{(i)} \rbrace_{i=1}^M$. To put it simply, we previously considered the equality of score functions represented by tensor decomposition and tensor network on set of all possible input sequences $X = \left( \mathbf{x}^{(1)}, \dots, \mathbf{x}^{(T)}\right),\;\mathbf{x}^{(t)} \in \mathbb{R}^N$, and now we restricted this set to exponentially large but finite grid of sequences consisting of template vectors only.
	
Define the matrix ${\mathbf{F} \in \mathbb{R}^{M \times M}}$ which holds the values taken by the representation function ${f_\theta:\mathbb{R}^N \rightarrow \mathbb{R}^M}$ on the selected templates $\mathbb{X}$:
\begin{equation}
\mathbf{F} \triangleq \begin{bmatrix} 
f_{\theta}(\mathbf{x}^{(1)}) & f_{\theta}(\mathbf{x}^{(2)}) & \dots & f_{\theta}(\mathbf{x}^{(M)})
\end{bmatrix}^{\top}.
\end{equation}
Using the matrix $\mathbf{F}$ we note that the grid tensor of generalized shallow network has the following form (see~\Cref{sec:appendix} for derivation):
\begin{equation}\label{eq:grid_tensor_cp}
\boldsymbol{\Gamma}^{\ell}(\mathbb{X}) = \sum_{r=1}^R \lambda_{r} \left( \mathbf{F} \mathbf{v}_{r}^{(1)} \right) \otimes_\xi \left( \mathbf{F} \mathbf{v}_{r}^{(2)} \right) \otimes_\xi \dots \otimes_\xi \left( \mathbf{F} \mathbf{v}_{r}^{(T)} \right).
\end{equation}    
Construction of the grid tensor for generalized RNN is a bit more involved. We find that its grid tensor $\boldsymbol{\Gamma}^{\ell}(\mathbb{X})$ can be computed recursively, similar to the hidden state in the case of a single input sequence. The exact formulas turned out to be rather cumbersome and we moved them to \Cref{sec:appendix}. 

\section{Main results}{\label{sec:main_results}}

With grid tensors at hand we are ready to compare the expressive power of generalized RNNs and generalized shallow networks. In the further analysis, we will assume that $\xi(x, y) = \max(x, y, 0)$, i.e., we analyze RNNs and shallow networks with \emph{rectifier nonlinearity}. However, we need to make two additional assumptions. First of all, similarly to~\citep{cohen2016convolutional} we fix some templates $\mathbb{X}$ such that values of the score function outside of the grid generated by $\mathbb{X}$ are \emph{irrelevant for classification} and call them \emph{covering} templates. It was argued that for image data values of $M$ of order $100$ are sufficient (corresponding covering template vectors may represent Gabor filters). Secondly, we assume that the feature matrix $\mathbf{F}$ is invertible, which is a reasonable assumption and in the case of $f_{\theta}(\mathbf{x}) = \sigma(\mathbf{A}\mathbf{x} + \mathbf{b})$ for any distinct template vectors $\mathbb{X}$ the parameters $\mathbf{A}$ and $\mathbf{b}$ can be chosen in such a way that the matrix $\mathbf{F}$ is invertible. 

\subsection{Universality}
As was discussed in \cref{sec:grid-tensors} we can no longer use standard algebraic techniques to verify universality of tensor based networks. Thus, our first result states that generalized RNNs with $\xi(x, y) = \max(x, y, 0)$ are \emph{universal} in a sense that any tensor of order $T$ and size of each mode being $m$ can be realized as a grid tensor of such RNN (and similarly of a generalized shallow network).
\begin{theorem}[Universality]\label{thm:universal}
Let $\tens{H} \in \mathbb{R}^{M \times M \times \cdots \times M}$ be an arbitrary tensor of order $T$. Then there exist a \textbf{generalized shallow network} and a \textbf{generalized RNN} with rectifier nonlinearity ${\xi(x, y) = \max(x, y, 0)}$ such that grid tensor of each of the networks coincides with $\tens{H}$.
\end{theorem}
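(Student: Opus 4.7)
The plan is to establish universality in two stages: first for the generalized shallow network, then for the generalized RNN by explicitly simulating the shallow construction.

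\textbf{Stage 1 (shallow).} Since $\mathbf{F}$ is invertible, any desired vectors $\mathbf{u}_r^{(t)} \in \mathbb{R}^M$ can be realized as $\mathbf{F}\mathbf{v}_r^{(t)}$ by setting $\mathbf{v}_r^{(t)} = \mathbf{F}^{-1}\mathbf{u}_r^{(t)}$, so by \eqref{eq:grid_tensor_cp} it suffices to show that any tensor $\tens{H}$ of order $T$ and mode sizes $M$ admits a decomposition
\[
\tens{H} \;=\; \sum_{r} \lambda_r \,\mathbf{u}_r^{(1)} \otimes_\xi \cdots \otimes_\xi \mathbf{u}_r^{(T)}.
\]
Writing $\tens{H} = \sum_{i^*} \tens{H}_{i_1^*\dots i_T^*}\, \tens{E}_{i^*}$ with $\tens{E}_{i^*}$ the indicator tensor of the multiindex $i^* = (i_1^*, \dots, i_T^*)$, the key ingredient is the real-valued min/max inclusion--exclusion identity
\[
\min(x_1, \dots, x_T) \;=\; \sum_{\emptyset \neq S \subseteq [T]} (-1)^{|S|+1} \max_{t \in S} x_t,
\]
which, for $0/1$ inputs $x_t = \mathbf{1}[j_t = i_t^*]$, collapses to $\prod_t x_t = (\tens{E}_{i^*})_{j_1 \dots j_T}$. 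Each inner $\max_{t \in S} \mathbf{1}[j_t = i_t^*]$ is itself a single $\xi$-outer product: take $\mathbf{u}^{(t)}$ equal to the standard basis vector $\mathbf{e}_{i_t^*}$ for $t \in S$ and the zero vector for $t \notin S$ (the trailing $0$ inside $\xi(x,y) = \max(x,y,0)$ is harmless since all values are non-negative). Assembling over all $i^*$ and all nonempty $S$ yields a shallow network with at most $M^T(2^T-1)$ rank-one terms whose grid tensor equals $\tens{H}$.

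\textbf{Stage 2 (RNN).} I would lift the shallow representation to an RNN of width $R$ equal to the shallow rank, where the $k$-th hidden coordinate maintains the running $\xi$-reduction of the $k$-th shallow summand. Let $\mathbf{C}^{(t)}$ be the matrix whose $k$-th row is $(\mathbf{v}_k^{(t)})^\top$, so that $[\mathbf{C}^{(t)}f_\theta(\mathbf{x}^{(t)})]_k = \langle f_\theta(\mathbf{x}^{(t)}), \mathbf{v}_k^{(t)}\rangle$, and set $\tens{G}^{(t)}_{ijk} = \delta_{ik}\delta_{jk}$ for $t < T$. Then \eqref{eq:hidden_state_nonlinear} collapses to
\[
\mathbf{h}^{(t)}_k \;=\; \xi\bigl([\mathbf{C}^{(t)}f_\theta(\mathbf{x}^{(t)})]_k,\, \mathbf{h}^{(t-1)}_k\bigr),
\]
and starting from the unit $\mathbf{h}^{(0)} = 0$, associativity of $\max$ gives $\mathbf{h}^{(T-1)}_k = \max\bigl(\langle f_\theta(\mathbf{x}^{(1)}),\mathbf{v}_k^{(1)}\rangle,\dots,\langle f_\theta(\mathbf{x}^{(T-1)}),\mathbf{v}_k^{(T-1)}\rangle, 0\bigr)$. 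Finally the core $\tens{G}^{(T)}_{ij1} = \lambda_i \delta_{ij}$ produces $\mathbf{h}^{(T)} = \sum_k \lambda_k\, \xi\bigl(\langle f_\theta(\mathbf{x}^{(T)}), \mathbf{v}_k^{(T)}\rangle, \mathbf{h}^{(T-1)}_k\bigr)$, which by associativity of $\xi$ is exactly the shallow score function~\eqref{eq:cp-k-nonlinear} at the same input; hence the two grid tensors coincide.

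\textbf{Main obstacle.} The only non-routine ingredient is the min/max inclusion--exclusion identity, since $\xi$ is not bilinear and the standard algebraic trick of representing a basis tensor as a single rank-one outer product no longer works. Once that identity is available, everything else is bookkeeping: invertibility of $\mathbf{F}$ transports the abstract vectors $\mathbf{u}_r^{(t)}$ back to admissible parameters, the diagonal TT-cores keep the $R$ shallow summands evolving independently inside the hidden state, and the unit initialization $\mathbf{h}^{(0)} = 0$ guarantees that the recursive $\max$ agrees with the flat one in the shallow formula.
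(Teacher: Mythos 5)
Your proposal is correct, and its overall architecture mirrors the paper's: decompose $\tens{H}$ into elementary pieces, realize each piece by a small network, and combine the pieces linearly. The difference is in how the two stages are discharged. For the shallow part the paper simply cites Claim~4 of \citep{cohen2016convolutional}, whereas you give a self-contained construction via the inclusion--exclusion identity $\min(x_1,\dots,x_T)=\sum_{\emptyset\neq S\subseteq[T]}(-1)^{|S|+1}\max_{t\in S}x_t$, which for $0/1$ inputs produces each one--hot tensor $\tens{E}^{(i_1\dots i_T)}$ as a signed combination of at most $2^T-1$ generalized rank--one terms; this is a valid (if doubly exponential in rank) replacement for the citation, and it makes explicit exactly where rectifier-specific structure enters. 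For the RNN part, your diagonal-core construction ($\tens{G}^{(t)}_{ijk}=\delta_{ik}\delta_{jk}$ for $1<t<T$, final core $\lambda_i\delta_{ij}$) is precisely the paper's \Cref{prop:shallow-rnn} (rank--$1$ shallow $\Rightarrow$ width--$1$ RNN) composed with the $R$-fold version of \Cref{lemma:additive} (block-diagonal stacking plus a weighted sum in the last core), written out in one step; the two arguments are the same up to packaging, and the paper's modularization buys reusability of \Cref{lemma:additive} elsewhere while yours is more direct. One small notational slip: at $t=1$ the core lives in $\mathbb{R}^{L\times 1\times R_1}$, so the formula $\delta_{ik}\delta_{jk}$ cannot be applied literally (it would zero out all coordinates $k>1$); you clearly intend $\tens{G}^{(1)}_{i,1,k}=\delta_{ik}$ with $\mathbf{h}^{(0)}$ the scalar unit $0$, which is what your displayed recursion assumes, so this is cosmetic rather than a gap.
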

Part of \Cref{thm:universal} which corresponds to generalized shallow networks readily follows from \citep[Claim 4]{cohen2016convolutional}. In order to prove the statement for the RNNs the following two lemmas are used.
\begin{restatable}{lemma}{lemmasum}\label{lemma:additive}
Given two generalized RNNs with grid tensors $\boldsymbol{\Gamma}^{\ell_A}(\mathbb{X})$, $\boldsymbol{\Gamma}^{\ell_B}(\mathbb{X})$, and arbitrary \mbox{$\xi$-nonlinearity}, there exists a generalized RNN with grid tensor $\boldsymbol{\Gamma}^{\ell_C}(\mathbb{X})$ satisfying
\[
\boldsymbol{\Gamma}^{\ell_C}(\mathbb{X}) = a \boldsymbol{\Gamma}^{\ell_A}(\mathbb{X}) + b\boldsymbol{\Gamma}^{\ell_B}(\mathbb{X}), \quad \forall a, b \in \mathbb{R}.
\]
\end{restatable}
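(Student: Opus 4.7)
The plan is to construct the RNN $C$ as a block-diagonal ``direct sum'' of $A$ and $B$, carrying both hidden states side-by-side through the intermediate time steps, and then using the final core to combine the two scalar outputs linearly with coefficients $a$ and $b$. Denote the parameters of $A$ by $\{\mathbf{C}^{(t,A)} \in \mathbb{R}^{L_A \times M},\ \tens{G}^{(t,A)} \in \mathbb{R}^{L_A \times R_{t-1}^A \times R_t^A}\}$, and similarly for $B$, with $R_0 = R_T = 1$ in both cases. For $C$ I would set $L_C = L_A + L_B$, $R_0^C = R_T^C = 1$, and $R_t^C = R_t^A + R_t^B$ for $1 \le t \le T-1$. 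The input matrix is the vertical stack $\mathbf{C}^{(t,C)} = \bigl[\mathbf{C}^{(t,A)};\ \mathbf{C}^{(t,B)}\bigr]$, so that $\mathbf{C}^{(t,C)}f_\theta(\mathbf{x})$ is the concatenation of the two projected inputs.

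For the interior cores $t = 1, \dots, T-1$, I would define $\tens{G}^{(t,C)}$ to be block-diagonal: it equals $\tens{G}^{(t,A)}_{ijk}$ when $i \le L_A$, $j \le R_{t-1}^A$, $k \le R_t^A$; it equals $\tens{G}^{(t,B)}_{i-L_A,\,j-R_{t-1}^A,\,k-R_t^A}$ when $i > L_A$, $j > R_{t-1}^A$, $k > R_t^A$; and it is zero on all cross-blocks. The case $t=1$ is identical, noting that $R_0^A = R_0^B = 1$ so only the $i,k$ modes vary. I would then prove by induction on $t$ that $\mathbf{h}^{(t,C)}$ is the concatenation $[\mathbf{h}^{(t,A)};\ \mathbf{h}^{(t,B)}]$ for every input sequence. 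The base case uses that the unit element $u$ of $\xi$ is shared between the three networks, so the initial hidden states are compatible. For the inductive step, when $k$ lies in the $A$-block of $\mathbf{h}^{(t,C)}$, the zero cross-entries in $\tens{G}^{(t,C)}$ restrict the sum in \eqref{eq:hidden_state_nonlinear} to $i \le L_A$ and $j \le R_{t-1}^A$, which by construction reproduces exactly the update of $A$ on $\mathbf{h}^{(t-1,A)}$; the symmetric argument handles the $B$-block.

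For the final core $\tens{G}^{(T,C)} \in \mathbb{R}^{L_C \times R_{T-1}^C \times 1}$ I would use the same block pattern but scale the $A$-block by $a$ and the $B$-block by $b$, and zero out the cross-blocks. Applying the same separation argument to the last hidden-state update yields
\[
\mathbf{h}^{(T,C)} \;=\; a \sum_{i\le L_A,\,j\le R_{T-1}^A} \tens{G}^{(T,A)}_{ij1}\,\xi\bigl([\mathbf{C}^{(T,A)}f_\theta(\mathbf{x}^{(T)})]_i,\,\mathbf{h}^{(T-1,A)}_j\bigr) \;+\; b\,(\cdots) \;=\; a\,\ell_A(X) + b\,\ell_B(X)
\]
for every $X$, and restricting to sequences drawn from the templates $\mathbb{X}$ gives the claimed grid-tensor identity. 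The main thing to verify carefully is that the zero cross-blocks truly decouple the two sub-computations under an arbitrary $\xi$; this works because $\xi$ is applied entrywise inside the sum and the sum itself is linear in the core entries, so no property of $\xi$ beyond the existence of a unit $u$ is ever invoked. I do not expect a significant technical obstacle — the construction is essentially the tensor-train analogue of forming the sum of two low-rank tensors by concatenating their rank indices.
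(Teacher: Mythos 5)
Your construction is exactly the one the paper uses: stack the input matrices, make the intermediate cores block-diagonal with zero cross-blocks so the two hidden states are carried side by side, and scale the $A$- and $B$-blocks of the final core by $a$ and $b$ (the paper notes, as you do, that this mirrors addition of tensors in TT format). The proof is correct and essentially identical to the paper's, with your explicit induction and the remark that only the unit element of $\xi$ is needed being slightly more detailed than the paper's ``straightforward to verify.''
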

This lemma essentially states that the collection of grid tensors of generalized RNNs with any nonlinearity is closed under taking arbitrary linear combinations. Note that the same result clearly holds for generalized shallow networks because they are linear combinations of rank $1$ shallow networks by definition. 
\begin{restatable}{lemma}{lemmabasis}\label{lemma:basis}
Let $\tens{E}^{(j_1 j_2 \dots j_T)}$ be an arbitrary one--hot tensor, defined as 
$$
\tens{E}^{(j_1 j_2 \dots j_T)}_{i_1 i_2 \dots i_T} = \begin{cases}
1, \quad j_t = i_t \quad \forall t \in \lbrace 1, \dots ,T \rbrace, \\
0, \quad \text{otherwise}.
\end{cases}
$$
Then there exists a generalized RNN with rectifier nonlinearities such that its grid tensor satisfies
$$
\boldsymbol{\Gamma}^{\ell}(\mathbb{X}) = \tens{E}^{(j_1 j_2 \dots j_T)}.
$$
\end{restatable}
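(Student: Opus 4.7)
The plan is to directly construct a generalized RNN with rectifier nonlinearity $\xi(x,y)=\max(x,y,0)$ whose grid tensor equals the one-hot tensor $\tens{E}^{(j_1\dots j_T)}$. The central idea is to carry, in the hidden state, a ``prefix match'' flag: after step $t$, $\mathbf{h}^{(t)}$ should encode whether the templates observed so far coincide with $(j_1,\ldots,j_t)$, so that the final scalar $\mathbf{h}^{(T)}$ equals $1$ if and only if the entire sequence matches.

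I would take hidden widths $R_1 = \dots = R_{T-1} = 2$, $R_T = 1$, and input dimension $L = 2$, and aim for the encoding
\begin{equation*}
\mathbf{h}^{(t)} = (p^{(t)}, q^{(t)}) = \begin{cases}(1,0) & \text{if } (i_1,\ldots,i_t) = (j_1,\ldots,j_t),\\(0,1) & \text{otherwise.}\end{cases}
\end{equation*}
At each step I set $\mathbf{C}^{(t)} f_\theta(\mathbf{x}^{(i_t)}) = ([i_t\neq j_t],\,1)^\top$; this is realizable because $\mathbf{F}$ is invertible, so the required map on templates is pinned down by freely choosing values on the basis $\{f_\theta(\mathbf{x}^{(i)})\}_{i=1}^M$. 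The RNN update~\eqref{eq:hidden_state_nonlinear} then produces the four scalars $M_{ij}=\max(a_i,h_j,0)$ for $i,j\in\{1,2\}$, and $\mathbf{h}^{(t)}$ is an arbitrary linear combination of these through $\tens{G}^{(t)}$. Enumerating the four cases (prefix matches or not, crossed with current input matching or not) gives four length-$4$ vectors of $M_{ij}$ values; solving a small linear system yields explicit coefficients (e.g.\ $p'=-M_{12}+M_{21}$ and $q'=M_{12}+M_{21}-M_{22}$) that implement the desired update. The first step uses $\mathbf{h}^{(0)}=0$ (the unit of $\xi$) and is handled by an analogous but smaller calculation, and the last step sets $R_T=1$ and reads off the $p$-component.

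The main obstacle is the update at intermediate $t$. A one-dimensional hidden state is provably insufficient: because $\max(\cdot,\cdot,0)$ clamps to zero and then combines with $h$ by max, a scalar flag cannot, after any linear combination of the available $\max$-values, reproduce $p\cdot[i_t=j_t]$ across all four cases simultaneously. This forces us to carry the complement $q^{(t)}=1-p^{(t)}$ in a second coordinate, which supplies enough affinely independent case-vectors of $M_{ij}$'s to make the linear system for $\tens{G}^{(t)}$ solvable. Verifying this consistency, i.e.\ that the four case-vectors span the required target directions, is the one place where care is needed; everything else is bookkeeping with the definitions of Section~\ref{sec:generalized-networks}, and translating the resulting hidden-state dynamics to the grid tensor via the recursive formula sketched in Section~\ref{sec:grid-tensors} gives $\boldsymbol{\Gamma}^{\ell}(\mathbb{X})=\tens{E}^{(j_1\dots j_T)}$.
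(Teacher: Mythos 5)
Your construction is correct, but it takes a genuinely different route from the paper. The paper proves this lemma by pure reduction: it imports from \citep[Claim 4]{cohen2016convolutional} the fact that one--hot tensors are realizable by generalized \emph{shallow} networks with rectifier nonlinearity, observes (its Proposition on rank-$1$ shallow networks) that each rank-$1$ shallow term embeds into a generalized RNN with all ranks equal to $1$, and then invokes \Cref{lemma:additive} to take the linear combination inside the RNN class --- two lines, no explicit weights. You instead build the RNN from scratch: a width-$2$ hidden state carrying a prefix-match flag and its complement, with $\mathbf{C}^{(t)}$ pinned down via invertibility of $\mathbf{F}$ and $\tens{G}^{(t)}$ solving a small linear system over the four cases. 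I checked the four cases for your choice $p'=-M_{12}+M_{21}$, $q'=M_{12}+M_{21}-M_{22}$ with $\mathbf{C}^{(t)}f_\theta(\mathbf{x}^{(i_t)})=([i_t\neq j_t],1)^\top$, and they do implement the desired update, including the first step from $\mathbf{h}^{(0)}=0$ and the readout; so the argument goes through. What your approach buys is a self-contained, quantitatively sharper result (constant hidden width $2$, independent of $M$ and $T$, versus the width inherited from the shallow construction of Claim 4 after summation via \Cref{lemma:additive}); what the paper's approach buys is brevity and reuse of machinery it needs anyway, and it is the additive lemma that does the real work there. One aside in your writeup is wrong, though it does not affect the proof: a one-dimensional hidden state is \emph{not} provably insufficient. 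Encoding ``mismatch so far'' as $h\in\{0,2\}$ and feeding $a_1=2[i_t\neq j_t]$ makes the update exactly $h'=\max(a_1,h,0)$, i.e.\ a logical OR, with a single scalar core; only the final sign flip needs a second input channel (a constant), not a second hidden coordinate. So the two-dimensional state is a convenience of your particular $(1,0)/(0,1)$ encoding, not a necessity --- compare the paper's own construction in the proof of \Cref{thm:expressivity-1}, which uses the same negation trick with rank-$1$ intermediate contractions.
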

This lemma states that in the special case of rectifier nonlinearity $\xi(x, y) = \max(x, y, 0)$ any \emph{basis} tensor can be realized by some generalized RNN. 
\begin{proof}[\rm\bf{Proof of \Cref{thm:universal}}]
By \Cref{lemma:basis} for each one--hot tensor $\tens{E}^{(i_1 i_2 \dots i_T)}$ there exists a generalized RNN with rectifier nonlinearities, such that its grid tensor coincides with this tensor. Thus, by~\Cref{lemma:additive} we can construct an RNN with 
$$
\boldsymbol{\Gamma}^{\ell}(\mathbb{X}) = \sum_{i_1, i_2, \dots ,i_T} \tens{H}_{i_1 i_2 \dots i_d} \tens{E}^{(i_1 i_2 \dots i_T)} = \tens{H}.
$$
For generalized shallow networks with rectifier nonlinearities see the proof of~\citep[Claim 4]{cohen2016convolutional}.
\end{proof}
The same result regarding networks with product nonlinearities considered in \citep{khrulkov2018expressive} directly follows from the well--known properties of tensor decompositions (see \Cref{sec:appendix}).

We see that at least with such nonlinearities as $\xi(x, y) = \max(x, y, 0)$ and $\xi(x, y) = xy$ all the networks under consideration are universal and can represent any possible grid tensor. Now let us head to a discussion of \emph{expressivity} of these networks.

\subsection{Expressivity}
As was discussed in the introduction, expressivity refers to the ability of some class of networks to represent the same functions as some other class much more compactly. In our case the parameters defining \emph{size} of networks are \emph{ranks} of the decomposition, i.e. in the case of generalized RNNs ranks determine the size of the hidden state, and in the case of generalized shallow networks rank determines the width of a network. It was proven in \citep{cohen2016expressive, khrulkov2018expressive} that ConvNets and RNNs with multiplicative nonlinearities are \emph{exponentially} more expressive than the equivalent shallow networks: shallow networks of exponentially large width are required to realize the same score functions as computed by these deep architectures. Similarly to the case of ConvNets~\citep{cohen2016convolutional}, we find that expressivity of generalized RNNs with rectifier nonlinearity holds only partially, as discussed in the following two theorems. For simplicity, we assume that $T$ is even.
\begin{restatable}[Expressivity \rom{1}]{theorem}{theoremexpri}\label{thm:expressivity-1}
For every value of $R$ there exists a generalized RNN with ranks $\leq R$ and rectifier nonlinearity which is exponentially more efficient than shallow networks, i.e., the corresponding grid tensor may be realized only by a shallow network with rectifier nonlinearity of width at least $\frac{2}{MT} \min(M, R)^{\sfrac{T}{2}}$.
\end{restatable}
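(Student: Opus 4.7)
The approach is the standard matricization-rank argument, adapted from the multiplicative setting to the rectifier setting. Fix a balanced odd--even partition of the modes $\{1,\dots,T\}$ into two halves of size $T/2$, and for a $T$-way tensor $\tens{A}$ let $\mathrm{mat}(\tens{A})$ denote its matricization along this partition. The plan has two essentially independent steps followed by a division.

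\textbf{Step 1 (matricization rank bound on the shallow side).} I would first prove that every generalized rank-one shallow term $(\mathbf{F}\mathbf{v}^{(1)}) \otimes_\xi \cdots \otimes_\xi (\mathbf{F}\mathbf{v}^{(T)})$ with $\xi(x,y)=\max(x,y,0)$ has $\rank \mathrm{mat}(\cdot) \le MT/2$. The entry equals $\max(\alpha,\beta,0)$, where $\alpha$ is a function only of the row-multi-index $(i_1,i_3,\dots)$ and $\beta$ a function only of the column-multi-index $(i_2,i_4,\dots)$; each of $\alpha$ and $\beta$ is the maximum of $T/2$ scalars drawn from a pool of $M$ values and hence takes at most $MT/2$ distinct values. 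A matrix whose rows depend on $i$ only through a row-function with $K$ distinct values has at most $K$ distinct rows, hence rank at most $K$. By subadditivity of rank under summation, a shallow rectifier network of width $R_\mathrm{sh}$ realizes a grid tensor whose matricization has rank at most $R_\mathrm{sh}\cdot MT/2$.

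\textbf{Step 2 (explicit RNN with large matricization rank).} I would exhibit a generalized RNN with all ranks bounded by $R$ whose grid tensor has $\rank \mathrm{mat}(\boldsymbol{\Gamma}^{\ell}(\mathbb{X})) = \min(M,R)^{T/2}$. Using invertibility of $\mathbf{F}$ I would select the input matrices $\mathbf{C}^{(t)}$ so that the vectors $\mathbf{C}^{(t)}f_\theta(\mathbf{x}^{(i)})$ form a prescribed non-negative family adapted to the cores $\tens{G}^{(t)}$. The cores $\tens{G}^{(t)}$ would then be chosen so that (i) the hidden states $\mathbf{h}^{(t)}$ stay non-negative along the recursion over every input in $\mathbb{X}^T$, and (ii) entries are sufficiently separated in magnitude that in every evaluation of $\xi(a,b)=\max(a,b,0)$ appearing in the recursion one argument strictly dominates, so that the $\max$ collapses to a deterministic selection. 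Under (i)--(ii), the rectifier recurrence becomes functionally identical to a multiplicative generalized RNN, and the induced grid tensor inherits the generic matricization rank of a TT-tensor of TT-rank $R$ along the odd--even partition, which is $\min(M,R)^{T/2}$ by the multiplicative analysis of~\citep{khrulkov2018expressive}.

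\textbf{Step 3 (division).} If a generalized rectifier shallow network of width $R_\mathrm{sh}$ realizes the same grid tensor as the RNN from Step 2, Steps 1 and 2 combine to give $R_\mathrm{sh}\cdot MT/2 \geq \min(M,R)^{T/2}$, whence $R_\mathrm{sh}\geq \frac{2}{MT}\min(M,R)^{T/2}$.

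\textbf{Main obstacle.} The delicate piece is Step 2: one must design $(\mathbf{C}^{(t)},\tens{G}^{(t)})$ so that (i) and (ii) hold \emph{simultaneously} for all $M^T$ template sequences, and so that the resulting "selection-mode" multiplicative recurrence produces a TT-tensor attaining the generic matricization rank $\min(M,R)^{T/2}$. A natural route is a geometric scale-separation scheme in which the magnitudes of entries of $\mathbf{C}^{(t)}$ and $\tens{G}^{(t)}$ grow rapidly in $t$, guaranteeing that at every step exactly one branch of every $\max$ is active, while the "active" slices of each core are set to match a generic TT-rank-$R$ configuration. Verifying the uniform selection property and then transporting the multiplicative matricization-rank bound through the rectifier recurrence is the bulk of the work.
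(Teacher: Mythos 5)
Your Steps 1 and 3 match the paper: the shallow-side bound $\rank \le R_{\mathrm{sh}}\cdot MT/2$ on the odd--even matricization is exactly the paper's Lemma A.6 (quoted from Claim 9 of Cohen et al.), and the final division is identical. The genuine gap is in Step 2, and it is not just unfinished bookkeeping --- the proposed mechanism cannot work. If you arrange scales so that in every evaluation of $\xi(a,b)=\max(a,b,0)$ one argument strictly dominates, then $\xi$ collapses to a \emph{projection} onto that argument, not to a product: $\max(a,b,0)=a$ is a very different function from $ab$. A rectifier recurrence locked into such a uniform ``selection mode'' loses the multiplicative coupling between the current input and the accumulated hidden state, which is precisely what generates the exponential matricization rank in the multiplicative/TT analysis you want to invoke. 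Concretely, if the input branch always wins, the output depends essentially only on the last input; if the hidden-state branch always wins, it depends only on the earliest contributions; in either case the grid tensor's matricization has small rank, and the claim that the network ``inherits the generic matricization rank of a TT-tensor of TT-rank $R$'' is unjustified, because there is no regime in which $\max$ emulates multiplication.

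The paper's construction goes a different way and exploits the rectifier structure directly instead of reducing to the multiplicative case. With $\mathbf{C}^{(t)}$ chosen as ``bitwise negation'' matrices $\mathbf{1}-\mathbf{I}$ and cores that are (padded) identities plus one bias row, the RNN implements an equality test on consecutive inputs: the hidden state stores the (negated) current input, the next step checks whether the new input matches it, and any mismatch latches the output to $1$. The resulting grid tensor is the indicator of ``some consecutive pair differs,'' whose odd--even matricization is the all-ones matrix minus a $0/1$ matrix with $\min(M,R)^{T/2}$ ones in distinct rows and columns, hence has rank at least $\min(M,R)^{T/2}$. Note that in this construction the winner of each $\max$ \emph{depends on the input sequence}; that input-dependent branching is the source of the high rank, and it is exactly what your uniform scale-separation scheme would suppress. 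To repair your outline, replace Step 2 with an explicit comparator-style gadget of this kind (or any construction whose matricization you can write down exactly), rather than attempting to transport the multiplicative rank bound through the rectifier recurrence.
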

This result states that at least for some subset of generalized RNNs expressivity holds: exponentially wide shallow networks are required to realize the same grid tensor. Proof of the theorem is rather straightforward: we explicitly construct an example of such RNN which satisfies the following description. Given an arbitrary input sequence $X = \left( \mathbf{x}^{(1)}, \dots \mathbf{x}^{(T)} \right)$ assembled from the templates, these networks (if $M=R$) produce $0$ if $X$ has the property that $\mathbf{x}^{(1)} = \mathbf{x}^{(2)}, \mathbf{x}^{(3)} = \mathbf{x}^{(4)}, \dots, \mathbf{x}^{(T-1)} = \mathbf{x}^{(T)}$, and $1$ in every other case, i.e. they measure \emph{pairwise similarity} of the input vectors. A precise proof is given in \Cref{sec:appendix}.
\\
In the case of multiplicative RNNs \citep{khrulkov2018expressive} \emph{almost every} network possessed this property. This is not the case, however, for generalized RNNs with rectifier nonlinearities.
\begin{restatable}[Expressivity \rom{2}]{theorem}{theoremexprii}\label{thm:expressivity-2}
For every value of $R$ there exists an open set (which thus has positive measure) of generalized RNNs with rectifier nonlinearity $\xi(x, y) = \max(x, y, 0)$, such that for each RNN in this open set the corresponding grid tensor can be realized by a rank $1$ shallow network with rectifier nonlinearity.
\end{restatable}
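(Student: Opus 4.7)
The plan is to exhibit an explicit open subset of RNN parameters on which the score function is identically zero. Since the zero tensor is itself the grid tensor of the rank-$1$ rectifier shallow network obtained by taking $\lambda = 0$ in \eqref{eq:cp-k-nonlinear}, producing such an open set suffices for the theorem at every value of $R$. The mechanism is to drive the ReLUs into their flat (``dead'') region on every template input, a property that is preserved under small perturbations and hence open in parameter space.

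First I will choose each input-to-hidden matrix $\mathbf{C}^{(t)}$ so that, for every one of the finitely many templates $\mathbf{x}^{(i)} \in \mathbb{X}$, the vector $\mathbf{C}^{(t)} f_\theta(\mathbf{x}^{(i)})$ is componentwise strictly negative. Since $\mathbf{F}$ is invertible by assumption, a concrete realisation is $\mathbf{C}^{(t)} := -\mathbf{1}\mathbf{1}^\top (\mathbf{F}^\top)^{-1}$, which yields $\mathbf{C}^{(t)} f_\theta(\mathbf{x}^{(i)}) = -\mathbf{1}$ for every template. The inequalities $[\mathbf{C}^{(t)} f_\theta(\mathbf{x}^{(i)})]_l < 0$ form finitely many strict open conditions on the entries of $\mathbf{C}^{(t)}$, so the set of admissible matrices is a nonempty open subset of $\mathbb{R}^{L\times M}$.

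Next, for any such $\mathbf{C}^{(t)}$ and \emph{any} choice of cores $\tens{G}^{(t)}$, I argue by induction on $t$ that $\mathbf{h}^{(t)} = 0$. The base case is the convention $\mathbf{h}^{(0)} = 0$, which is the unit of $\xi(x,y) = \max(x,y,0)$. For the inductive step, the recursion \eqref{eq:hidden_state_nonlinear} reads
\begin{equation*}
\mathbf{h}^{(t)}_k = \sum_{i,j} \tens{G}^{(t)}_{ijk}\,\max\!\bigl([\mathbf{C}^{(t)}f_\theta(\mathbf{x}^{(i_t)})]_i,\; \mathbf{h}^{(t-1)}_j,\; 0\bigr),
\end{equation*}
and each argument of every $\max$ is non-positive (the first strictly negative by construction, the other two equal to $0$ by induction and by the explicit $0$ slot in the rectifier), so every $\max$ evaluates to $0$ and $\mathbf{h}^{(t)} = 0$. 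In particular $\ell(X) = \mathbf{h}^{(T)} = 0$ for every template sequence, so the grid tensor $\boldsymbol{\Gamma}^{\ell}(\mathbb{X})$ is identically zero, matching the $\lambda = 0$ rank-$1$ shallow network.

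Since the cores $\tens{G}^{(t)}$ are left entirely unconstrained, the admissible parameter set is a Cartesian product of a nonempty open set of $\mathbf{C}^{(t)}$'s with the full space of cores of the prescribed ranks; it is therefore open and of positive measure in the ambient RNN parameter space for every $R$. There is no real technical obstacle here — the only conceptual point worth stressing is that this dead-ReLU mechanism is specific to the rectifier: in the multiplicative setting there is no flat region and no analogous open set can arise, which is exactly what makes the dichotomy with Theorem~\ref{thm:expressivity-1} possible.
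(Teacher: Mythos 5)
Your proof is correct, and it rests on the same underlying mechanism as the paper's --- saturating the rectifier so that the grid tensor degenerates on an open set of parameters --- but it exploits the opposite branch of the $\max$. The paper sets $\mathbf{C}^{(t)} = (\mathbf{F}^{\top})^{-1}$ and takes large positive constant cores, so that after the first step the hidden state strictly dominates every input contribution and $\max(x,y,0)$ always returns the hidden-state argument; the grid tensor becomes a constant (hence ordinary rank-$1$) tensor, and openness is established by propagating strict inequalities through all $T$ steps of \cref{eq:grid_tensor_tt} under simultaneous perturbation of $\mathbf{C}^{(t)}$, $\tens{G}^{(t)}$ and $\mathbf{F}$. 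You instead push every pre-activation strictly below zero, so the $\max$ always returns its explicit $0$ slot and every hidden state vanishes. This buys you two things: the cores $\tens{G}^{(t)}$ are left entirely unconstrained, so your open set is a product of an open set of $\mathbf{C}^{(t)}$'s with the \emph{whole} core space rather than a small neighbourhood of a single point, and stability reduces to the observation that finitely many strict inequalities (which are open jointly in $\mathbf{C}^{(t)}$ and $\mathbf{F}$, so the argument survives counting the feature map among the parameters) cut out an open set, with no need to track magnitudes through the recursion. The price is that your witness is the identically-zero grid tensor, the most degenerate possible example, whereas the paper's witness is a nonzero constant tensor realized by a rank-$1$ network with nontrivial weights; both satisfy the literal statement of \Cref{thm:expressivity-2}, since the zero tensor is the grid tensor of the rank-$1$ shallow rectifier network with $\lambda = 0$ in \cref{eq:cp-k-nonlinear}. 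Your closing remark is also apt: neither saturation mechanism exists for $\xi(x,y)=xy$, which is why no such open set arises in the multiplicative setting of \citep{khrulkov2018expressive}.
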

In other words, for every rank $R$ we can find a set of generalized RNNs of positive measure such that the property of expressivity does not hold. In the numerical experiments in~\Cref{sec:experiments} and \Cref{sec:appendix} we validate whether this can be observed in practice, and find that the probability of obtaining CP--ranks of polynomial size becomes negligible with large $T$ and $R$. Proof of \Cref{thm:expressivity-2} is provided in \Cref{sec:appendix}.
\paragraph{Shared case} Note that all the RNNs used in practice have \emph{shared weights}, which allows them to process sequences of arbitrary length. So far in the analysis we have not made such assumptions about RNNs (i.e., $\tens{G}^{(2)} = \dots = \tens{G}^{(T - 1)}$). By imposing this constraint, we lose the property of universality; however, we believe that the statements of~\Cref{thm:expressivity-1,thm:expressivity-2} still hold (without requiring that shallow networks also have shared weights). Note that the example constructed in the proof of \Cref{thm:expressivity-2} already has this property, and for \Cref{thm:expressivity-1} we provide numerical evidence in \Cref{sec:appendix}.
\section{Experiments}{\label{sec:experiments}}

In this section, we study if our theoretical findings are supported by experimental data. In particular, we investigate whether generalized tensor networks can be used in practical settings, especially in problems typically solved by RNNs (such as natural language processing problems). Secondly, according to \Cref{thm:expressivity-2} for some subset of RNNs the equivalent shallow network may have a low rank. To get a grasp of how strong this effect might be in practice we numerically compute an estimate for this rank in various settings.

\paragraph{Performance} 
For the first experiment, we use two computer vision datasets MNIST~\citep{lecun1990handwritten} and CIFAR--10~\citep{krizhevsky2009learning}, and natural language processing dataset for sentiment analysis IMDB~\citep{maas-EtAl:2011:ACL-HLT2011}. For the first two datasets, we cut natural images into rectangular patches which are then arranged into vectors $\mathbf{x}^{(t)}$ (similar to~\citep{khrulkov2018expressive}) and for IMDB dataset the input data already has the desired sequential structure.

\Cref{fig:imdb_accuracy} depicts test accuracy on IMDB dataset for generalized shallow networks and RNNs with rectifier nonlinearity. We see that generalized shallow network of much higher rank is required to get the level of performance close to that achievable by generalized RNN. Due to limited space, we have moved the results of the experiments on the visual datasets to \Cref{sec:appendix-exp}.

\begin{figure}[ht!]
\centering
\begin{minipage}{.48\textwidth}
  \centering
  \includegraphics[width=\linewidth]{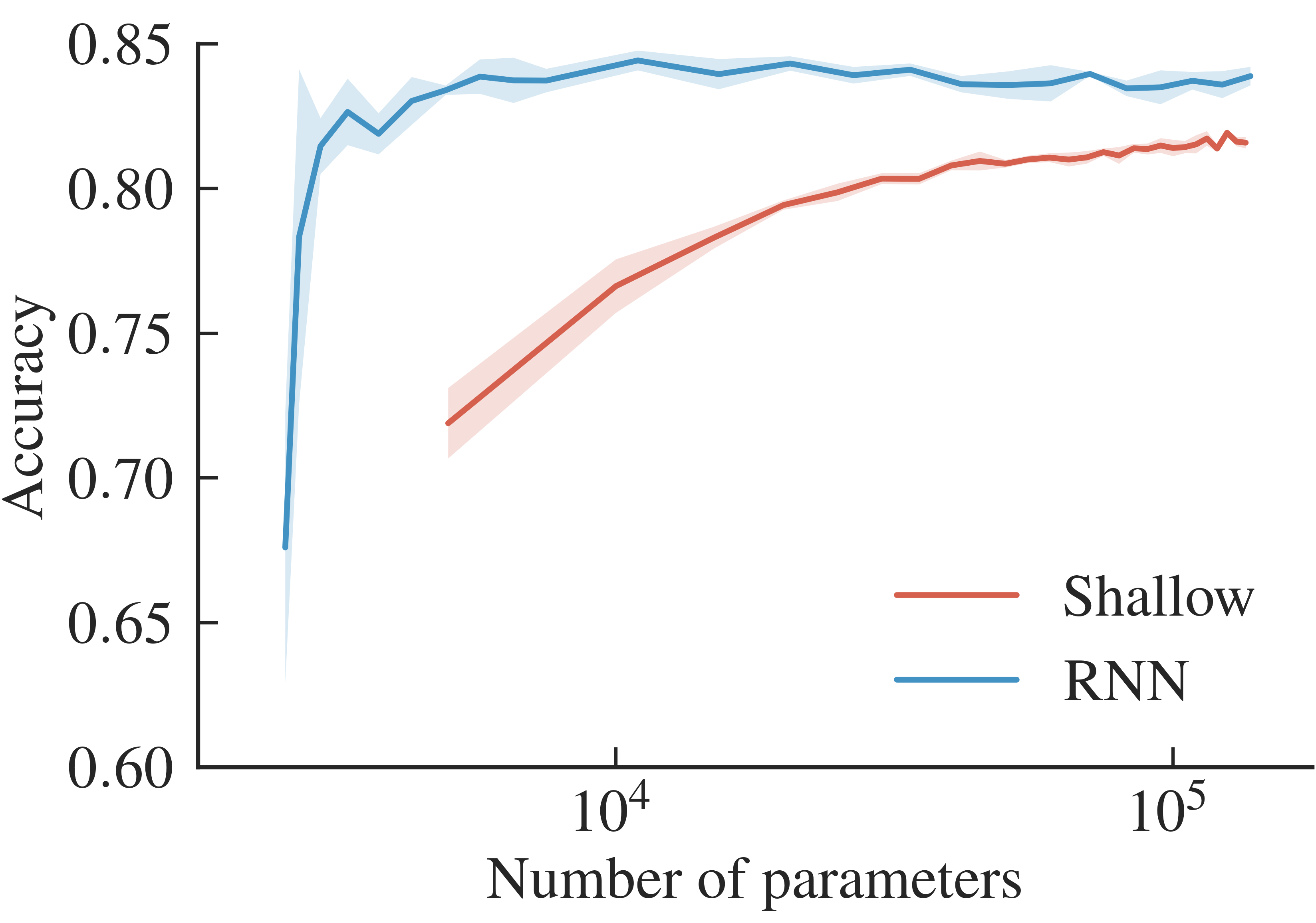}
  \captionof{figure}{Test accuracy on IMDB dataset for generalized RNNs and generalized shallow networks with respect to the total number of parameters ($M=50$, $T=100$, $\xi(x,y)=\max(x,y,0)$).}
  \label{fig:imdb_accuracy}
\end{minipage}%
\hfill
\begin{minipage}{.48\textwidth}
  \centering
  \includegraphics[width=\linewidth]{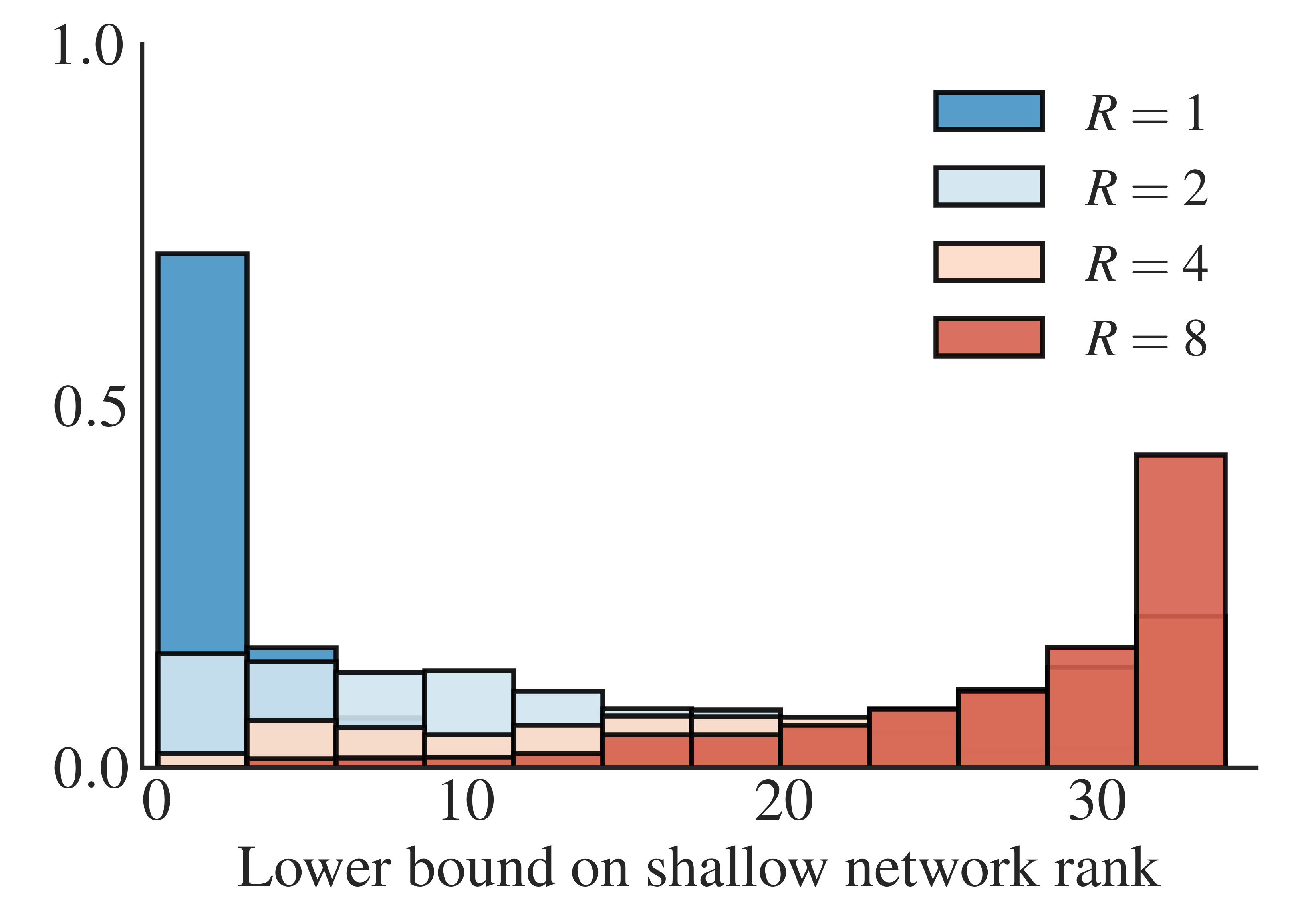}
  \captionof{figure}{Distribution of lower bounds on the rank of generalized shallow networks equivalent to randomly generated generalized RNNs of ranks $1, 2, 4, 8$ ($M=10$, $T=6$).}
  \label{fig:hist_ranks}
\end{minipage}
\end{figure}
\paragraph{Expressivity} For the second experiment we generate a number of generalized RNNs with different values of TT-rank $r$ and calculate a lower bound on the rank of shallow network necessary to realize the same grid tensor (to estimate the rank we use the same technique as in the proof of~\Cref{thm:expressivity-1}). \Cref{fig:hist_ranks} shows that for different values of $R$ and generalized RNNs of the corresponding rank there exist shallow networks of rank $1$ realizing the same grid tensor, which agrees well with~\Cref{thm:expressivity-2}. This result looks discouraging, however, there is also a positive observation. While increasing rank of generalized RNNs, more and more corresponding shallow networks will necessarily have exponentially higher rank. In practice we usually deal with RNNs of $R=10^2-10^3$ (dimension of hidden states), thus we may expect that effectively any function besides negligible set realized by generalized RNNs can be implemented only by exponentially wider shallow networks. The numerical results for the case of shared cores and other nonlinearities are given in \Cref{sec:appendix-exp}.

\section{Conclusion}{\label{sec:conclusion}}

In this paper, we sought a more complete picture of the connection between Recurrent Neural Networks and Tensor Train decomposition, one that involves various nonlinearities applied to hidden states. We showed how these nonlinearities could be incorporated into network architectures and provided complete theoretical analysis on the particular case of rectifier nonlinearity, elaborating on points of generality and expressive power. We believe our results will be useful to advance theoretical understanding of RNNs. In future work, we would like to extend the theoretical analysis to most competitive in practice architectures for processing sequential data such as LSTMs and attention mechanisms.

\section*{Acknowledgements}

We would like to thank Andrzej Cichocki for constructive discussions during the preparation of the manuscript and anonymous reviewers for their valuable feedback. This work was supported by the Ministry of Education and Science of the Russian Federation (grant 14.756.31.0001).

\bibliographystyle{iclr2019_conference}
\bibliography{iclr2019_conference}

\newpage
\appendix
\section{Proofs}{\label{sec:appendix}}
	
	\firstlemma*
	
	\begin{proof}
		\begin{equation*}
		\begin{aligned}
		l(X) & = \sum_{r_1=1}^{R_1} \dots \sum_{r_{T-1}=1}^{R_{T-1}} \prod_{t=1}^T \langle f_\theta(\mathbf{x}^{(t)}),\mathbf{g}_{r_{t-1}r_t}^{(t)} \rangle \\
		& = \sum_{r_1=1}^{R_1} \dots \sum_{r_{T-1}=1}^{R_{T-1}} \prod_{t=2}^T \langle f_\theta(\mathbf{x}^{(t)}),\mathbf{g}_{r_{t-1}r_t}^{(t)} \rangle \underbrace{\langle f_\theta(\mathbf{x}^{(1)}),\mathbf{g}_{r_{0}r_1}^{(1)} \rangle}_{\mathbf{h}_{r_1}^{(1)}}\\
		& = \sum_{r_{T-1}=1}^{R_{T-1}} \dots \sum_{r_1=1}^{R_1} \prod_{t=2}^T \langle f_\theta(\mathbf{x}^{(t)}),\mathbf{g}_{r_{t-1}r_t}^{(t)} \rangle \mathbf{h}^{(1)}_{r_1} \\
		& = \sum_{r_{T-1}=1}^{R_{T-1}} \dots \sum_{r_2=1}^{R_2} \prod_{t=3}^T \langle f_\theta(\mathbf{x}^{(t)}),\mathbf{g}_{r_{t-1}r_t}^{(t)} \rangle \underbrace{\sum_{r_1=1}^{r_1} \langle f_\theta(\mathbf{x}^{(2)}),\mathbf{g}_{r_{1}r_2}^{(2)} \rangle \mathbf{h}_{r_1}^{(1)}}_{\mathbf{h}^{(2)}_{r_2}} \\
		& = \sum_{r_{T-1}=1}^{R_{T-1}} \dots \sum_{r_2=1}^{R_2} \prod_{t=3}^T \langle f_\theta(\mathbf{x}^{(t)}),\mathbf{g}_{r_{t-1}r_t}^{(t)} \rangle \mathbf{h}_{r_2}^{(2)} \\
		& = \dots \\
		& = \sum_{r_{T-1}=1}^{R_{T-1}} \langle f_\theta(\mathbf{x}^{(T)}),\mathbf{g}_{r_{T-1}r_T}^{(T)} \rangle \mathbf{h}_{r_{T-1}}^{(T-1)} = \mathbf{h}_{r_T}^{(T)} = \mathbf{h}^{(T)}.
		\end{aligned}
		\end{equation*}
	\end{proof}

    \begin{statement}\label{prop:subsume}
    If we replace the generalized outer product $\otimes_{\xi}$ in \cref{eq:hidden_state_nonlinear} with the standard outer product $\otimes$, we can subsume matrices $\mathbf{C}^{(t)}$ into tensors $\tens{G}^{(t)}$ without loss of generality.
    \end{statement}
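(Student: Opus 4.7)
The plan is to unfold the hidden-state recursion of \cref{eq:hidden_state_nonlinear} in index form under the substitution $\xi(x,y)=xy$ (so that $\otimes_\xi=\otimes$) and exploit the multilinearity of ordinary tensor contraction to absorb each $\mathbf{C}^{(t)}$ directly into the corresponding core $\tens{G}^{(t)}$. With the standard outer product the bracket collapses to a product of scalars, so the update can be rewritten as
\begin{equation*}
\mathbf{h}^{(t)}_k \;=\; \sum_{i,j} \tens{G}^{(t)}_{ijk}\, [\mathbf{C}^{(t)} f_\theta(\mathbf{x}^{(t)})]_i\, \mathbf{h}^{(t-1)}_j \;=\; \sum_{i',j}\left(\sum_{i} \tens{G}^{(t)}_{ijk}\, \mathbf{C}^{(t)}_{ii'}\right) f_\theta(\mathbf{x}^{(t)})_{i'}\, \mathbf{h}^{(t-1)}_j,
\end{equation*}
which already isolates $\mathbf{C}^{(t)}$ as a linear recombination along the first mode of $\tens{G}^{(t)}$.

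Next, I would introduce the absorbed core $\tilde{\tens{G}}^{(t)} \in \mathbb{R}^{M \times R_{t-1} \times R_t}$ defined by the mode-$1$ contraction $\tilde{\tens{G}}^{(t)}_{i'jk} := \sum_i \tens{G}^{(t)}_{ijk}\, \mathbf{C}^{(t)}_{ii'}$, and observe that in terms of $\tilde{\tens{G}}^{(t)}$ the update collapses to $\mathbf{h}^{(t)}_k = \sum_{i',j}\tilde{\tens{G}}^{(t)}_{i'jk}\, f_\theta(\mathbf{x}^{(t)})_{i'}\, \mathbf{h}^{(t-1)}_j$, i.e., precisely the form of \cref{eq:hidden_state_nonlinear} with each $\mathbf{C}^{(t)}$ replaced by the identity. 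Since $\mathbf{h}^{(0)}$ is the same in both parameterizations and the updates agree pointwise at every step, a short induction on $t$ then shows that the two networks generate identical hidden-state trajectories and hence the same score function $\ell(X)$. The only bookkeeping point is at the boundaries $t=1,T$, where $\tens{G}^{(1)}$ and $\tens{G}^{(T)}$ are two-dimensional; the same contraction prescription applies verbatim and produces matrices $\tilde{\tens{G}}^{(1)}\in\mathbb{R}^{M\times R_1}$ and $\tilde{\tens{G}}^{(T)}\in\mathbb{R}^{M\times R_{T-1}}$.

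I do not expect a substantive obstacle: the entire argument is driven by the bilinearity of scalar multiplication, which lets the combination $\sum_i \tens{G}^{(t)}_{ijk}\mathbf{C}^{(t)}_{ii'}$ be pulled freely past the factor $f_\theta(\mathbf{x}^{(t)})_{i'}\mathbf{h}^{(t-1)}_j$. The point worth highlighting, rather than a difficulty, is exactly where this would fail for a general $\xi$: one has $\xi\!\left(\sum_{i'} \mathbf{C}^{(t)}_{ii'}y_{i'},\, h\right)\neq \sum_{i'}\mathbf{C}^{(t)}_{ii'}\,\xi(y_{i'},h)$, so in the genuinely nonlinear setting $\mathbf{C}^{(t)}$ cannot be pushed inside $\xi$ and must remain an explicit parameter of the generalized RNN. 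This both completes the proof of the proposition and motivates why the matrices $\mathbf{C}^{(t)}$ were retained in the definition of the generalized RNN with arbitrary $\xi$-nonlinearity.
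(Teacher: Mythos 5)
Your argument is correct and matches the paper's own proof essentially verbatim: both expand the contraction in index form and define the absorbed core $\tilde{\tens{G}}^{(t)}_{ljk}=\sum_i \tens{G}^{(t)}_{ijk}\mathbf{C}^{(t)}_{il}$, reducing the update to the form of \cref{eq:hidden_state_index_form}. Your added remarks on the boundary cores and on why the absorption fails for nonlinear $\xi$ are sound but not needed beyond what the paper states.
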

    
    \begin{proof}
    Let us rewrite hidden state equation \cref{eq:hidden_state_nonlinear} after transition from $\otimes_\xi$ to $\otimes$:
    \begin{equation*}
    \begin{aligned}
    \mathbf{h}^{(t)}_k & = \sum_{i,j} \tens{G}^{(t)}_{ijk} \left[ \mathbf{C}^{(t)} f_{\theta}(\mathbf{x}^{(t)}) \otimes \mathbf{h}^{(t-1)} \right]_{ij} \\
    & = \sum_{i,j} \tens{G}_{ijk}^{(t)} \sum_{l} \mathbf{C}^{(t)}_{il} f_{\theta}(\mathbf{x}^{(t)})_l^{\ } \mathbf{h}^{(t-1)}_j \quad \quad \left\{ \tilde{\tens{G}}^{(t)}_{ljk} = \sum_i \tens{G}_{ijk}^{(t)} \mathbf{C}^{(t)}_{il}\right\} \\
    & = \sum_{l,j} \tilde{\tens{G}}^{(t)}_{ljk} f_{\theta}(\mathbf{x}^{(t)})_l^{\ } \mathbf{h}^{(t-1)}_j \\
    & = \sum_{l,j} \tilde{\tens{G}}^{(t)}_{ljk} \left[ f_{\theta}(\mathbf{x}^{(t)}) \otimes \mathbf{h}^{(t-1)} \right]_{lj}.
    \end{aligned}
    \end{equation*}
    We see that the obtained expression resembles those presented in \cref{eq:hidden_state_index_form} with TT-cores $\tens{G}^{(t)}$ replaced by $\tilde{\tens{G}}^{(t)}$ and thus all the reasoning applied in the absence of matrices $\mathbf{C}^{(t)}$ holds valid.
    \end{proof}

	\begin{statement}\label{prop:cp-grid}
		Grid tensor of generalized shallow network has the following form~(\cref{eq:grid_tensor_cp}):
		
		\begin{equation*}
		\boldsymbol{\Gamma}^{\ell}(\mathbb{X}) = \sum_{r=1}^R \lambda_{r} \left( \mathbf{F} \mathbf{v}_{r}^{(1)} \right) \otimes_\xi \left( \mathbf{F}\mathbf{v}_{r}^{(2)} \right) \otimes_\xi \dots \otimes_\xi \left( \mathbf{F}\mathbf{v}_{r}^{(T)} \right).
		\end{equation*}
		
	\end{statement}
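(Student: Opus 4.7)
The plan is to prove the statement by direct unpacking of definitions, with no nontrivial algebraic content required. The identity follows from combining (i) the definition of the grid tensor as the array of score-function values on all template tuples, (ii) the explicit form of the generalized shallow network's score function given in equation \eqref{eq:cp-k-nonlinear}, and (iii) the definition of $\mathbf{F}$, which implies $[\mathbf{F}\mathbf{v}_r^{(t)}]_{i} = \langle f_\theta(\mathbf{x}^{(i)}), \mathbf{v}_r^{(t)}\rangle$ for every $i\in\{1,\dots,M\}$.

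Concretely, I would fix an arbitrary multi-index $(i_1,\dots,i_T)$ with $i_t\in\{1,\dots,M\}$ and compute the corresponding entry of the left-hand side. By the definition of the grid tensor,
\[
\boldsymbol{\Gamma}^{\ell}(\mathbb{X})_{i_1\dots i_T} = \ell\!\left(\mathbf{x}^{(i_1)},\dots,\mathbf{x}^{(i_T)}\right),
\]
and substituting the shallow-network score function yields
\[
\boldsymbol{\Gamma}^{\ell}(\mathbb{X})_{i_1\dots i_T} = \sum_{r=1}^{R}\lambda_r\,\xi\!\left(\langle f_\theta(\mathbf{x}^{(i_1)}),\mathbf{v}_r^{(1)}\rangle,\dots,\langle f_\theta(\mathbf{x}^{(i_T)}),\mathbf{v}_r^{(T)}\rangle\right).
\]
Replacing each inner product $\langle f_\theta(\mathbf{x}^{(i_t)}),\mathbf{v}_r^{(t)}\rangle$ by $[\mathbf{F}\mathbf{v}_r^{(t)}]_{i_t}$ and invoking the definition of the generalized outer product, the summand becomes precisely the $(i_1,\dots,i_T)$ entry of $\lambda_r\,(\mathbf{F}\mathbf{v}_r^{(1)})\otimes_\xi\cdots\otimes_\xi(\mathbf{F}\mathbf{v}_r^{(T)})$, which matches the right-hand side entrywise.

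The only observation worth making explicit is that equation \eqref{eq:cp-k-nonlinear} already rewrites the iterated $\otimes_\xi$ of scalars as a single application of $\xi$ to $T$ arguments, which is legitimate because $\xi$ is assumed associative and commutative; this is what lets us avoid any intermediate tensor manipulation and reduce the proof to entrywise comparison. Since the identity is proved for an arbitrary index tuple, it holds as an equality of tensors, which is the claim. I expect no real obstacle: there is no rank bookkeeping, no universality argument, and no interaction between $\xi$ and linear operations to track, so the proof is essentially a one-line computation after the notational conventions are spelled out.
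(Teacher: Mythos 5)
Your proposal is correct and follows essentially the same route as the paper's proof: both fix an arbitrary multi-index, identify $[\mathbf{F}\mathbf{v}_r^{(t)}]_{i_t}$ with $\langle f_\theta(\mathbf{x}^{(i_t)}),\mathbf{v}_r^{(t)}\rangle$, and match the resulting expression entrywise against the score function in \cref{eq:cp-k-nonlinear}, the only cosmetic difference being that the paper unfolds the right-hand side down to $\ell(X)$ while you expand the left-hand side up to the right-hand side.
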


	\begin{proof}
    	Let $X = \left( \mathbf{x}^{(i_1)}, \mathbf{x}^{(i_2)}, \dots ,\mathbf{x}^{(i_T)} \right)$ denote an arbitrary sequence of \textit{templates}. Corresponding element of the grid tensor defined in \cref{eq:grid_tensor_cp} has the following form:
		
		\begin{equation*}
        \begin{aligned}
		\boldsymbol{\Gamma}^{\ell}(\mathbb{X})_{i_1 i_2 \dots i_T} 
        & = \sum_{r=1}^R \lambda_{r} \left[ \left( \mathbf{F} \mathbf{v}_{r}^{(1)} \right) \otimes_\xi \left( \mathbf{F}\mathbf{v}_{r}^{(2)} \right) \otimes_\xi \dots \otimes_\xi \left( \mathbf{F}\mathbf{v}_{r}^{(T)} \right) \right]_{i_1 i_2 \dots i_T} \\
        & = \sum_{r=1}^R \lambda_{r} \left( \mathbf{F} \mathbf{v}_{r}^{(1)} \right)_{i_1} \otimes_\xi \left( \mathbf{F}\mathbf{v}_{r}^{(2)} \right)_{i_2} \otimes_\xi \dots \otimes_\xi \left( \mathbf{F}\mathbf{v}_{r}^{(T)} \right)_{i_T} \\
        & = \sum_{r=1}^R \lambda_{r} \xi \left( \langle f_{\theta} (\mathbf{x}^{(i_1)}), \mathbf{v}_{r}^{(1)} \rangle, \dots, \langle f_{\theta} (\mathbf{x}^{(i_T)}), \mathbf{v}_{r}^{(T)} \rangle \right) = \ell(X).
        \end{aligned}
		\end{equation*}
	\end{proof}
    
    \begin{statement}
    Grid tensor of a generalized RNN has the following form:
    \begin{equation}\label{eq:grid_tensor_tt}
\begin{aligned}
& \boldsymbol{\Gamma}^{\ell, 0}(\mathbb{X}) = \mathbf{h}^{(0)} \in \mathbb{R}^1,\\
& \boldsymbol{\Gamma}^{\ell, 1}(\mathbb{X})_{k m_1} = \sum_{i, j} \tens{G}^{(1)}_{ijk} \left(\mathbf{C}^{(1)} \mathbf{F}^{\top} \otimes_{\xi} \boldsymbol{\Gamma}^{\ell, 0} \right)_{i m_1 j} \in \mathbb{R}^{R_1 \times M},\\
& \boldsymbol{\Gamma}^{\ell, 2}(\mathbb{X})_{k m_1 m_2} = \sum_{i, j} \tens{G}^{(2)}_{ijk} \left(\mathbf{C}^{(2)} \mathbf{F}^{\top} \otimes_{\xi} \boldsymbol{\Gamma}^{\ell, 1} \right)_{i m_2 j m_1} \in \mathbb{R}^{R_2 \times M \times M},\\
& \cdots \\
& \boldsymbol{\Gamma}^{\ell, T}(\mathbb{X})_{k m_1 m_2 \dots m_T} = \sum_{i, j} \tens{G}^{(T)}_{ijk} \left(\mathbf{C}^{(T)} \mathbf{F}^{\top} \otimes_{\xi} \boldsymbol{\Gamma}^{\ell, T-1} \right)_{i m_T j m_1 \dots m_{T - 1}} \in \mathbb{R}^{1 \times M \times M \times \cdots \times M},\\
&\boldsymbol{\Gamma}^{\ell}(\mathbb{X}) = \boldsymbol{\Gamma}^{\ell, T}(\mathbb{X})_{1, :, :, \dots, :}
\end{aligned}
\end{equation}
    \end{statement}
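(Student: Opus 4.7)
The plan is to establish this recursion by induction on $t$, maintaining the semantic invariant that the slice $\boldsymbol{\Gamma}^{\ell, t}(\mathbb{X})_{:, m_1, \dots, m_t}$ equals the hidden state $\mathbf{h}^{(t)} \in \mathbb{R}^{R_t}$ that the generalized RNN \eqref{eq:hidden_state_nonlinear} produces when fed the template sequence $(\mathbf{x}^{(m_1)}, \dots, \mathbf{x}^{(m_t)})$. Once this invariant is proved for all $t$, the final identity $\boldsymbol{\Gamma}^{\ell}(\mathbb{X}) = \boldsymbol{\Gamma}^{\ell, T}(\mathbb{X})_{1, :, \dots, :}$ follows immediately: the grid tensor is by definition the collection of values $\ell(X) = \mathbf{h}^{(T)}$ over all template sequences, and since $R_T = 1$ the first index is trivial.

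For the base case $t = 0$, there is nothing to check beyond the initialization convention $\boldsymbol{\Gamma}^{\ell, 0}(\mathbb{X}) = \mathbf{h}^{(0)}$, which carries no input dependence. For the inductive step, I would fix an arbitrary tuple of template indices $(m_1, \dots, m_t)$ and expand $\mathbf{h}^{(t)}_k$ using the recurrence \eqref{eq:hidden_state_nonlinear}. Two substitutions then do all the work: the $m_t$-th row of $\mathbf{F}$ equals $f_\theta(\mathbf{x}^{(m_t)})^\top$, so $[\mathbf{C}^{(t)} f_\theta(\mathbf{x}^{(m_t)})]_i = (\mathbf{C}^{(t)} \mathbf{F}^\top)_{i, m_t}$; and the inductive hypothesis applied to the prefix $(m_1, \dots, m_{t-1})$ gives $\mathbf{h}^{(t-1)}_j = \boldsymbol{\Gamma}^{\ell, t-1}(\mathbb{X})_{j, m_1, \dots, m_{t-1}}$. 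Plugging both into the recurrence yields
\begin{equation*}
\mathbf{h}^{(t)}_k = \sum_{i, j} \tens{G}^{(t)}_{ijk}\, \xi\!\left( (\mathbf{C}^{(t)} \mathbf{F}^\top)_{i, m_t},\; \boldsymbol{\Gamma}^{\ell, t-1}(\mathbb{X})_{j, m_1, \dots, m_{t-1}} \right),
\end{equation*}
and the bracketed $\xi$-expression is, by definition of the generalized outer product, the entry $(\mathbf{C}^{(t)} \mathbf{F}^\top \otimes_\xi \boldsymbol{\Gamma}^{\ell, t-1})_{i, m_t, j, m_1, \dots, m_{t-1}}$, matching the claimed recursion exactly.

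The main obstacle is purely bookkeeping: keeping the index ordering in the generalized outer product straight. The convention \eqref{eq:kron-prod} places every index of the first factor before every index of the second, so in $\mathbf{C}^{(t)} \mathbf{F}^\top \otimes_\xi \boldsymbol{\Gamma}^{\ell, t-1}$ the fresh template index $m_t$ inherited from the matrix sits in position $2$, while the accumulated prefix $m_1, \dots, m_{t-1}$ inherited from $\boldsymbol{\Gamma}^{\ell, t-1}$ trails the hidden-state index $j$. I would verify this explicitly for $t = 2$, reproducing the $(i, m_2, j, m_1)$ ordering in the statement, before asserting the general pattern. Beyond this index-tracking, there is no quantitative content in the argument: the generalized outer product is entrywise and the proof is simply an unrolling of the recurrence, exactly in the spirit of the telescoping calculation used in the proof of \Cref{lemma:tt_score_function}.
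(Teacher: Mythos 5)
Your proposal is correct and is exactly the argument the paper has in mind: the paper's own proof is a one-line remark that the computation is analogous to the shallow-network case (Proposition~\ref{prop:cp-grid}) and follows by evaluating the recurrence \eqref{eq:hidden_state_nonlinear} entrywise, which is precisely the induction you carry out, with the invariant $\boldsymbol{\Gamma}^{\ell,t}(\mathbb{X})_{:,m_1,\dots,m_t}=\mathbf{h}^{(t)}$ and the substitution $[\mathbf{C}^{(t)}f_\theta(\mathbf{x}^{(m_t)})]_i=(\mathbf{C}^{(t)}\mathbf{F}^\top)_{i,m_t}$. Your explicit attention to the index ordering $(i,m_t,j,m_1,\dots,m_{t-1})$ of the generalized outer product fills in the only detail the paper leaves implicit.
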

\begin{proof}
Proof is similar to that of~\Cref{prop:cp-grid} and uses \cref{eq:hidden_state_nonlinear} to compute the elements of the grid tensor.            
\end{proof}

\lemmasum*
\begin{proof}
Let these RNNs be defined by the weight parameters $$\Theta_A = \left(\lbrace \mathbf{C}^{(t)}_A \rbrace_{t=1}^T \in \mathbb{R}^{L_A \times M}, \lbrace \tens{G}^{(t)}_A \rbrace_{t=1}^T \in \mathbb{R}^{L_A \times R_{t-1 ,A} \times R_{t,A}} \right),$$
and 
$$\Theta_B = \left(\lbrace \mathbf{C}^{(t)}_B \rbrace_{t=1}^T \in \mathbb{R}^{L_B \times M}, \lbrace \tens{G}^{(t)}_B \rbrace_{t=1}^T \in \mathbb{R}^{L_B \times R_{t-1, B} \times R_{t,B}} \right).$$
We claim that the desired grid tensor is given by the RNN with the following weight settings.
\begin{equation*}
\begin{aligned}
\mathbf{C}^{(t)}_C &\in \mathbb{R}^{(L_A + L_B) \times M} \\
 \mathbf{C}^{(t)}_C &= \begin{bmatrix}
\mathbf{C}^{(t)}_A \\
\mathbf{C}^{(t)}_B
\end{bmatrix}\\
\tens{G}^{(1)}_C &\in \mathbb{R}^{(L_A + L_B) \times 1 \times (R_{t, A} + R_{t, B})} \\
[\tens{G}^{(1)}_C]_{i, :, :} &= \begin{cases}
\begin{bmatrix}
[\tens{G}^{(1)}_A]_{i, :, :} & 0 \\ 
\end{bmatrix}, \quad i \in \lbrace 1, \dots, L_A \rbrace \\
\\
\begin{bmatrix}
0 & [\tens{G}^{(1)}_B]_{(i - L_A), :, :} 
\end{bmatrix}, \quad i \in \lbrace L_A + 1, \dots, L_A + L_B \rbrace
\end{cases} \\
\tens{G}^{(t)}_C &\in \mathbb{R}^{(L_A + L_B) \times (R_{t-1, A} + R_{t-1, B}) \times (R_{t, A} + R_{t, B})}, \quad 1 < t < T \\
[\tens{G}^{(t)}_C]_{i, :, :} &= \begin{cases}
\begin{bmatrix}
[\tens{G}^{(t)}_A]_{i, :, :} & 0 \\
0 & 0 
\end{bmatrix}, \quad i \in \lbrace 1, \dots, L_A \rbrace \\
\\
\begin{bmatrix}
0 & 0 \\
0 & [\tens{G}^{(t)}_B]_{(i - L_A), :, :} 
\end{bmatrix}, \quad i \in \lbrace L_A + 1, \dots, L_A + L_B \rbrace
\end{cases} \\
\tens{G}^{(T)}_C &\in \mathbb{R}^{(L_A + L_B) \times (R_{t-1, A} + R_{t-1, B}) \times 1}\\
[\tens{G}^{(T)}_C]_{i, :, :} &= \begin{cases}
\begin{bmatrix}
a[\tens{G}^{(T)}_A]_{i, :, :} \\
0
\end{bmatrix}, \quad i \in \lbrace 1, \dots, L_A \rbrace \\
\\
\begin{bmatrix}
0 \\
b[\tens{G}^{(T)}_B]_{(i - L_A), :, :} 
\end{bmatrix}, \quad i \in \lbrace L_A + 1, \dots, L_A + L_B \rbrace.
\end{cases}
\end{aligned}
\end{equation*}
It is straightforward to verify that the network defined by these weights possesses the following property:
$$
\mathbf{h}^{(t)}_C = \begin{bmatrix}
\mathbf{h}^{(t)}_A \\
\mathbf{h}^{(t)}_B
\end{bmatrix}, \quad 0 < t < T,
$$
and 
$$
\mathbf{h}^{(T)}_C = a \mathbf{h}^{(T)}_A + b \mathbf{h}^{(T)}_B,
$$
concluding the proof. We also note that these formulas generalize the well--known formulas for addition of two tensors in the Tensor Train format \citep{oseledets2011tensor}.
\end{proof}

\begin{statement}\label{prop:shallow-rnn}
For any associative and commutative binary operator $\xi$, an arbitrary generalized rank $1$ shallow network with $\xi$--nonlinearity can be represented in a form of generalized RNN with unit ranks ($R_1 = \dots = R_{T-1} = 1$) and $\xi$--nonlinearity. 
\end{statement}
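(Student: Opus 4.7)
The plan is to construct the desired generalized RNN explicitly by picking its weights so that a single scalar hidden state accumulates exactly one inner product per time step, thereby reproducing the rank $1$ shallow network's nested $\xi$-application. Writing the shallow network's output as $\ell_s(X) = \lambda\,\xi(\alpha_1,\ldots,\alpha_T)$ with $\alpha_t \triangleq \langle f_\theta(\mathbf{x}^{(t)}), \mathbf{v}^{(t)} \rangle$, I would set $L = 1$ and $R_1 = \cdots = R_{T-1} = 1$, take $\mathbf{C}^{(t)} = (\mathbf{v}^{(t)})^\top \in \mathbb{R}^{1 \times M}$ so that $\mathbf{C}^{(t)} f_\theta(\mathbf{x}^{(t)}) = \alpha_t$, put $\tens{G}^{(t)} = 1 \in \mathbb{R}^{1 \times 1 \times 1}$ for every $t < T$ and $\tens{G}^{(T)} = \lambda$, and initialize $\mathbf{h}^{(0)} = u$ to the unit of $\xi$.

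With these choices the generalized RNN recursion \eqref{eq:hidden_state_nonlinear} collapses to a purely scalar update $\mathbf{h}^{(t)} = \tens{G}^{(t)} \cdot \xi(\alpha_t, \mathbf{h}^{(t-1)})$, since both the $i$ and the $j$ sums degenerate to a single term. The next step is to unroll this recursion: at $t = 1$ we have $\mathbf{h}^{(1)} = \xi(\alpha_1, u)$, which need not simplify on its own, but at $t = 2$ we have $\mathbf{h}^{(2)} = \xi(\alpha_2, \xi(\alpha_1, u))$, which by associativity and commutativity of $\xi$ equals $\xi(\alpha_1, \alpha_2, u)$ and then by the unit property collapses to $\xi(\alpha_1, \alpha_2)$. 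A routine induction on $t$ extends this to $\mathbf{h}^{(t)} = \xi(\alpha_1, \ldots, \alpha_t)$ for all $t \geq 2$, using only associativity of $\xi$ in the inductive step, and the final multiplication by $\tens{G}^{(T)} = \lambda$ yields $\mathbf{h}^{(T)} = \lambda\,\xi(\alpha_1,\ldots,\alpha_T) = \ell_s(X)$ as required.

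The only delicate point is the treatment of $\mathbf{h}^{(1)}$: the unit property as stated, $\xi(x, y, u) = \xi(x, y)$, is strictly weaker than the identity $\xi(x, u) = x$ (indeed, for $\xi(x,y) = \max(x,y,0)$ with $u = 0$, $\xi(x, u) = \max(x, 0) \neq x$ in general), so I cannot simplify $\mathbf{h}^{(1)}$ in isolation. The remedy is simply not to try: I keep $\mathbf{h}^{(1)} = \xi(\alpha_1, u)$ as a nested expression and let the next step present the exact three-argument pattern $\xi(\alpha_2, \xi(\alpha_1, u)) = \xi(\alpha_1, \alpha_2, u)$ to which the stated unit property applies. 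After $t = 2$ the induction runs on associativity alone, so this boundary case is the one spot where the proof needs care; all other steps amount to mechanical bookkeeping.
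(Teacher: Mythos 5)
Your construction is exactly the one the paper uses: $\mathbf{C}^{(t)} = (\mathbf{v}^{(t)})^{\top}$, $\tens{G}^{(t)} = 1$ for $t < T$, $\tens{G}^{(T)} = \lambda$, and $\mathbf{h}^{(0)} = u$, followed by the same unrolling of the scalar recursion via associativity and commutativity of $\xi$. Your handling of the boundary term $\mathbf{h}^{(1)} = \xi(\alpha_1, u)$ --- deferring the use of the unit property until the three-argument pattern $\xi(\alpha_1, \alpha_2, u)$ appears at $t=2$ --- is in fact slightly more careful than the paper's own write-up, which silently drops the $u$.
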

\begin{proof}
Let $\Theta = \left(\lambda, \lbrace \mathbf{v}^{(t)} \rbrace_{t=1}^{T} \right)$ be the parameters specifying the given generalized shallow network. Then the following weight settings provide the equivalent generalized RNN (with $\mathbf{h}^{(0)}$ being the unity of the operator $\xi$). 
\begin{equation*}
\begin{aligned}
& \mathbf{C}^{(t)} = \left( \mathbf{v}^{(t)}\right)^{\top} \in \mathbb{R}^{1 \times M}, \\
& \tens{G}^{(t)} = 1, \quad t < T,\\
& \tens{G}^{(T)} = \lambda.
\end{aligned}
\end{equation*}

Indeed, in the notation defined above, hidden states of generalized RNN have the following form:

\begin{equation*}
\begin{aligned}
\mathbf{h}^{(t)} & = \tens{G}^{(t)} \xi \left( [\mathbf{C}^{(t)} f_{\theta}(\mathbf{x}^{(t)})], \mathbf{h}^{(t-1)}\right) \\
& = \xi \left( \langle f_{\theta}(\mathbf{x}^{(t)}), \mathbf{v}^{(t)} \rangle , \mathbf{h}^{(t-1)} \right), \quad t=1, \dots, T-1 \\
\mathbf{h}^{(T)} & = \lambda \xi \left( \langle f_{\theta}(\mathbf{x}^{(T)}), \mathbf{v}^{(T)} \rangle , \mathbf{h}^{(T-1)} \right).
\end{aligned}
\end{equation*}

The score function of generalized RNN is given by \cref{eq:hidden_state_nonlinear}:

\begin{equation*}
\begin{aligned}
\ell(X) = \mathbf{h}^{(T)} & = \lambda \xi \left( \langle f_{\theta}(\mathbf{x}^{(T)}), \mathbf{v}^{(T)} \rangle , \mathbf{h}^{(T-1)} \right) \\
& = \lambda \xi \left( \langle f_{\theta}(\mathbf{x}^{(T)}), \mathbf{v}^{(T)} \rangle , \langle f_{\theta}(\mathbf{x}^{(T-1)}), \mathbf{v}^{(T-1)} \rangle, \mathbf{h}^{(T-2)} \right) \\
& \dots \\
& = \lambda \xi \left( \langle f_{\theta}(\mathbf{x}^{(T)}), \mathbf{v}^{(T)} \rangle , \dots, \langle f_{\theta}(\mathbf{x}^{(1)}), \mathbf{v}^{(1)} \rangle \right),
\end{aligned}
\end{equation*}

which coincides with the score function of rank 1 shallow network defined by parameters $\Theta$.

\end{proof}

\lemmabasis*
\begin{proof}
It is known that the statement of the lemma holds for generalized shallow networks with rectifier nonlinearities (see \citep[Claim 4]{cohen2016convolutional}). Based on \Cref{prop:shallow-rnn} and \Cref{lemma:additive} we can conclude that it also holds for generalized RNNs with rectifier nonlinearities.
\end{proof}

\begin{statement}
Statement of \Cref{thm:universal} holds with $\xi(x, y) = xy$.
\end{statement}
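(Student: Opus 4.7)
The plan is to exploit the fact that with $\xi(x,y)=xy$ the generalized outer product $\otimes_\xi$ coincides with the standard outer product, so the score functions of generalized shallow networks and generalized RNNs reduce, respectively, to CP and Tensor Train decompositions of weight tensors. Since every tensor admits both a CP and a TT representation (with sufficiently large rank), the problem is reduced to undoing the feature map $\mathbf{F}$, which is possible by the standing assumption that $\mathbf{F}$ is invertible.

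For the shallow network part, I would begin by fixing an arbitrary target $\tens{H} \in \mathbb{R}^{M\times\cdots\times M}$ and writing an (exact) CP decomposition $\tens{H} = \sum_{r=1}^{R}\lambda_r\, \mathbf{u}_r^{(1)}\otimes\cdots\otimes\mathbf{u}_r^{(T)}$, which always exists for $R$ large enough (e.g. $R \le M^{T-1}$). Using \Cref{prop:cp-grid} with $\xi(x,y)=xy$, the grid tensor of the shallow network with parameters $\{\lambda_r,\mathbf{v}_r^{(t)}\}$ equals $\sum_r \lambda_r (\mathbf{F}\mathbf{v}_r^{(1)})\otimes\cdots\otimes(\mathbf{F}\mathbf{v}_r^{(T)})$. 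Setting $\mathbf{v}_r^{(t)} := \mathbf{F}^{-1}\mathbf{u}_r^{(t)}$, which is well-defined by invertibility of $\mathbf{F}$, immediately gives $\boldsymbol{\Gamma}^\ell(\mathbb{X}) = \tens{H}$.

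For the RNN part, I would first invoke \Cref{prop:subsume}: with $\xi(x,y)=xy$, the matrices $\mathbf{C}^{(t)}$ can be folded into the cores $\tens{G}^{(t)}$ without loss of generality, so the recursion collapses to the ordinary TT score function of \cref{eq:score_tt} evaluated on $f_\theta(\mathbf{x}^{(t)})$. Next, pick any TT decomposition $\tens{H} = \sum_{r_1,\dots,r_{T-1}} \mathbf{a}^{(1)}_{r_0 r_1}\otimes\cdots\otimes \mathbf{a}^{(T)}_{r_{T-1} r_T}$ with cores $\tens{A}^{(t)}\in\mathbb{R}^{M\times R_{t-1}\times R_t}$, which exists for TT-ranks of size at most $M^{\lfloor T/2\rfloor}$. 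Evaluating the grid tensor produces exactly the TT decomposition whose cores are obtained from $\tens{G}^{(t)}$ by contracting mode $1$ with $\mathbf{F}$ (since $(\mathbf{F}\mathbf{g}^{(t)}_{r_{t-1}r_t})_{i_t} = \langle f_\theta(\mathbf{x}^{(i_t)}),\mathbf{g}^{(t)}_{r_{t-1}r_t}\rangle$). Thus I would define the cores of the RNN by $\tens{G}^{(t)} := \mathbf{F}^{-1}\tens{A}^{(t)}$ (contraction on the first mode), which again is well-defined by the invertibility of $\mathbf{F}$. Substituting back verifies that the resulting grid tensor equals $\tens{H}$.

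The main obstacle, though really a bookkeeping one, is keeping the indices straight when inverting $\mathbf{F}$ on the correct mode of each TT core and confirming that the boundary cores $\tens{G}^{(1)}$ and $\tens{G}^{(T)}$ (which have trivial rank on one side) still admit this construction; this is a direct consequence of the TT format allowing $R_0 = R_T = 1$. Unlike the rectifier case, no separate combinatorial argument (i.e., no analogues of \Cref{lemma:basis} and \Cref{lemma:additive}) is needed, since here universality follows purely from the existence of exact CP/TT decompositions and a single linear change of variables.
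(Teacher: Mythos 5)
Your proposal is correct and follows essentially the same route as the paper: the paper first forms $\widehat{\tens{H}}$ by contracting $\mathbf{F}^{-1}$ into every mode of $\tens{H}$ and then takes exact CP and TT decompositions of $\widehat{\tens{H}}$, whereas you decompose $\tens{H}$ first and then apply $\mathbf{F}^{-1}$ to each factor or core --- these are the same construction up to a trivial commutation. The key ingredients (invertibility of $\mathbf{F}$, existence of exact CP/TT decompositions, and the reduction of the generalized networks to \cref{eq:score_canonical} and \cref{eq:score_tt} when $\xi(x,y)=xy$, via \Cref{prop:subsume}) all match the paper's argument.
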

\begin{proof}
By assumption the matrix $\mathbf{F}$ is invertible. Consider the following tensor $\widehat{\tens{H}}$	:
\[
\widehat{\tens{H}}_{i_1 i_2 \dots i_T} = \sum_{j_1, \dots, j_T} \tens{H}_{j_1, \dots, j_T}\mathbf{F}^{-1}_{j_1 i_1}\dots\mathbf{F}^{-1}_{j_T i_T},
\]
and the score function in the form of \cref{eq:score-func}:
\[
\ell(X) = \langle \widehat{\tens{H}}, \mathbf{\Phi}(X) \rangle.
\]
Note that by construction for any input assembled from the template vectors we obtain ${\ell\left((\mathbf{x}^{(i_1)}, \dots, \mathbf{x}^{(i_T)})\right) = \tens{H}_{i_1 \dots i_T}}$.
By taking the standard TT and CP decompositions of $\widehat{\tens{H}}$ which always exist \citep{oseledets2011tensor, kolda2009tensor}, and using \Cref{lemma:tt_score_function,eq:score_canonical} we conclude that universality holds.
\end{proof}

\theoremexpri*
In order to prove the theorem we will use the standard technique of matricizations. Simply put, by matricizing a tensor we reshape it into a matrix by splitting the indices of a tensor into two collections, and converting each one of them into one long index. I.e., for a tensor $\tens{A}$ of order $T$ with mode sizes being $m$, we split the set $\lbrace 1, \dots, T \rbrace$ into two non--overlapping ordered subsets $s$ and $t$, and define the matricization $\mathbf{A}^{(s, t)} \in \mathbb{R}^{M^{|s|} \times M^{|t|}}$ by simply reshaping (and possibly transposing) the tensor $\tens{A}$ according to $s$ and $t$. We will consider the matricization obtained by taking $s_{odd} = (1,3,\dots ,T-1)$, ${t_{even} = (2,4, \dots ,T)}$, i.e., we split out even and odd modes. A typical application of matricization is the following: suppose that we can upper and lower bound the ordinary matrix rank of a certain matricization using the parameters specifying each of the architectures being analyzed. Then under the assumption that both architectures realize the same grid tensor (and thus ranks of the matricization coincide) we can compare the sizes of corresponding architectures. In the case of generalized shallow networks with rectifier nonlinearity we will use the following result~\citep[Claim 9]{cohen2016convolutional}.
\begin{lemma}{\label{lemma:rank-shallow}}
Let $\boldsymbol{\Gamma}^{\ell}(\mathbb{X})$ be a grid tensor generated by a generalized shallow network of rank $R$ and $\xi(x, y) = \max(x, y, 0)$. Then 
$$\rank \left[\boldsymbol{\Gamma}^{\ell}(\mathbb{X})\right]^{(s_{odd}, t_{even})} \leq R \frac{T M}{2},$$
where the ordinary matrix rank is assumed.
\end{lemma}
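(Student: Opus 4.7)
My plan is to reduce the claim to a single rank-$1$ summand via subadditivity of matrix rank, and then to exploit the elementary but crucial observation that a max of $T/2$ scalars drawn from an $MT/2$-element pool takes only linearly many distinct values, so that the matricization factors through a low-dimensional intermediate.

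Starting from the explicit grid-tensor formula in~\cref{eq:grid_tensor_cp}, I will flatten the nested generalized outer products using associativity and commutativity of $\xi$ to write
\[
\boldsymbol{\Gamma}^{\ell}(\mathbb{X})_{i_1 \dots i_T} \;=\; \sum_{r=1}^{R} \lambda_r \,\max\!\left(a_r^{(1)}(i_1),\, \ldots,\, a_r^{(T)}(i_T),\, 0\right),
\]
with $a_r^{(t)}(i) := (\mathbf{F}\mathbf{v}_r^{(t)})_i$. Since matricization is linear and $\rank$ is sub-additive under addition, it suffices to bound the rank of the matricization of each rank-$1$ summand by $TM/2$; summing over $r$ then yields the claimed $R\cdot TM/2$.

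Fixing $r$ and suppressing the subscript, I will split the arguments of the max by parity, defining $p(\mathbf{i}_o) := \max_{t \text{ odd}} a^{(t)}(i_t)$ and $q(\mathbf{j}_e) := \max_{t \text{ even}} a^{(t)}(i_t)$, where $\mathbf{i}_o=(i_1,i_3,\dots,i_{T-1})$ indexes the rows and $\mathbf{j}_e=(i_2,i_4,\dots,i_T)$ indexes the columns of the matricization. Then the matricized entry becomes $\max(p(\mathbf{i}_o),q(\mathbf{j}_e),0)$. The key observation --- and the one point in the argument that is not purely formal --- is that $p(\mathbf{i}_o)$ always equals one of the scalars in the finite pool $\{a^{(t)}(i): t\text{ odd},\, i\in[M]\}$, which has cardinality at most $MT/2$. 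Hence $p$ takes at most $P \le MT/2$ distinct values on its exponentially large domain, and symmetrically $q$ takes at most $Q \le MT/2$ distinct values.

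To conclude, I will enumerate the distinct values $\{p_k\}_{k=1}^{P}$ and $\{q_\ell\}_{\ell=1}^{Q}$ and introduce $0/1$ selection matrices $\mathbf{S}\in\{0,1\}^{M^{T/2}\times P}$ and $\mathbf{T}\in\{0,1\}^{M^{T/2}\times Q}$ with $\mathbf{S}_{\mathbf{i}_o,k}=1 \Leftrightarrow p(\mathbf{i}_o)=p_k$, and analogously for $\mathbf{T}$. The matricization then factorizes as $\mathbf{S}\,\tilde{\mathbf{A}}\,\mathbf{T}^\top$ with $\tilde{\mathbf{A}}_{k\ell}=\max(p_k,q_\ell,0)$, giving rank $\le \min(P,Q) \le TM/2$ for each summand, and the subadditive sum over $r$ completes the argument. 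The only delicate step is the counting of distinct values of $p$ and $q$; everything else reduces to elementary facts about subadditivity of rank and the factorization of matrices whose entries depend only on a function of the row and column indices.
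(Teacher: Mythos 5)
Your proof is correct. The paper itself does not prove this lemma---it simply imports it as Claim~9 of \citet{cohen2016convolutional}---so your argument serves as a self-contained reconstruction, and it follows essentially the same route as the cited proof: subadditivity of rank reduces the claim to a single rank-$1$ summand, and the entry $\max\bigl(p(\mathbf{i}_o),q(\mathbf{j}_e),0\bigr)$ depends on the row index only through $p(\mathbf{i}_o)$, which lies in a pool of at most $MT/2$ scalars, so the matricization has at most $MT/2$ distinct rows. The one step worth stating explicitly (you do so implicitly via associativity) is that the $T$-ary extension of $\xi(x,y)=\max(x,y,0)$ is $\max(x_1,\dots,x_T,0)$, which splits cleanly into the parity-wise maxima $p$ and $q$; with that in place the factorization $\mathbf{S}\tilde{\mathbf{A}}\mathbf{T}^{\top}$ and the bound $\min(P,Q)\le MT/2$ per summand are all valid, and summing over $r$ gives the stated bound.
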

This result is a generalization of a well--known property of the standard CP-decomposition (i.e. if $\xi(x, y) = xy$), which states that for a rank $R$ decomposition, the matrix rank of \emph{every} matricization is bounded by $R$.

In order to prove~\Cref{thm:expressivity-1} we will construct an example of a generalized RNN with exponentially large matrix rank of the matricization of grid tensor, from which and \Cref{lemma:rank-shallow} the statement of the theorem will follow. 
\begin{lemma}{\label{lemma:rank-rnn}}
Without loss of generality assume that $\mathbf{x}_i = \mathbf{e}_i$ (which can be achieved since $\mathbf{F}$ is invertible). Let $\mathbf{1}^{(p, q)}$ denote the matrix of size $p \times q$ with each entry being $1$, $\mathbf{I}^{(p, q)}$ denote the matrix of size $p \times q$ with $\mathbf{I}^{(p, q)}_{ij} = \delta_{ij}$ ($\delta$ being the Kronecker symbol), and $\mathbf{b} = [1-\min(M,R), \mathbf{0}_{R-1}^{\top}] \in \mathbb{R}^{1 \times R}$.
Consider the following weight setting for a generalized RNN with ${\xi(x, y) = \max(x, y, 0)}$.
\begin{equation*}
\begin{aligned}
& \mathbf{C}^{(t)} = 
\begin{cases}
\mathbf{1}^{M, M} - \mathbf{I}^{M, M}, \ t \ \textnormal{odd}, \\
\mathbf{1}^{M + 1, M} - \mathbf{I}^{M + 1, M}, \ t \ \textnormal{even}.
\end{cases}  \\
& \tens{G}^{(t)} = \begin{cases}
\mathbf{I}^{M, R} \in \mathbb{R}^{M \times 1 \times R}, \ t \ \textnormal{odd}, \\
\begin{bmatrix}
\mathbf{I}^{M, R} \\
\mathbf{b}
\end{bmatrix} \in \mathbb{R}^{(M + 1)\times R \times 1}, \ t \ \textnormal{even}.
\end{cases}
\end{aligned}
\end{equation*}
Then grid tensor $\boldsymbol{\Gamma}^{\ell}(\mathbb{X})$ of this RNN satisfies
$$\rank \left[\boldsymbol{\Gamma}^{\ell}(\mathbb{X})\right]^{(s_{odd}, t_{even})} \geq \min(M, R)^{\sfrac{T}{2}},$$
where the ordinary matrix rank is assumed. 
\end{lemma}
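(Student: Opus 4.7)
The strategy is to compute the RNN's hidden states in closed form, read off the grid tensor as a ``pairwise-inequality'' indicator, and then exhibit inside the $(s_{odd}, t_{even})$ matricization a square submatrix of the form $\mathbf{J}_N - \mathbf{I}_N$ with $N = \min(M,R)^{T/2}$, whose rank is immediate from its spectrum. Writing $n = \min(M,R)$, the target closed form to be proved is
\[
\boldsymbol{\Gamma}^{\ell}(\mathbb{X})_{i_1 \dots i_T} \;=\; 1 \;-\; \prod_{k=1}^{T/2} \mathbf{1}\!\left[\, i_{2k-1} = i_{2k} \le n \,\right].
\]

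The closed form is established by induction on $t$. Because $\mathbf{F} = \mathbf{I}$, the preprocessed input at step $t$ satisfies $[\mathbf{C}^{(t)} \mathbf{e}_{i_t}]_i = 1 - \delta_{i, i_t}$, and the unit of $\xi(x, y) = \max(x, y, 0)$ is $\mathbf{h}^{(0)} = 0$. Using $\tens{G}^{(1)} = \mathbf{I}^{M,R}$, a direct calculation gives $\mathbf{h}^{(1)}_k = (1 - \delta_{k, i_1})\mathbf{1}[k \le n]$. The inductive step alternates two regimes. For odd $t$ with scalar $\mathbf{h}^{(t-1)} \in \{0, 1\}$, the identity core $\tens{G}^{(t)} = \mathbf{I}^{M,R}$ yields $\mathbf{h}^{(t)}_k = \max(1-\delta_{k, i_t}, \mathbf{h}^{(t-1)})$ for $k \le n$ and $\mathbf{h}^{(t)}_k = 0$ otherwise; this produces the pattern $1 - \delta_{\cdot, i_t}$ on the first $n$ coordinates when $\mathbf{h}^{(t-1)} = 0$, and the all-ones pattern on the first $n$ coordinates when $\mathbf{h}^{(t-1)} = 1$. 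For even $t$, the ``diagonal block'' of $\tens{G}^{(t)}$ contributes $\sum_{i=1}^{n} \xi(1-\delta_{i, i_t}, \mathbf{h}^{(t-1)}_i)$, while the ``bias row'' $\mathbf{b} = [1-n, 0, \dots, 0]$ contributes $(1-n)\,\xi(1, \mathbf{h}^{(t-1)}_1) = 1-n$ (using $\mathbf{h}^{(t-1)}_1 \in \{0,1\}$ and the fact that row $M+1$ of $\mathbf{C}^{(t)}$ is all-ones). Substituting the two patterns produced by the odd step, one checks that the $n$-background of the diagonal block cancels against the bias, leaving $\mathbf{h}^{(t)} = 1 - \mathbf{1}[\mathbf{h}^{(t-2)} = 0 \text{ and } i_{t-1} = i_t \le n]$. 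Iterating over $t = 2, 4, \dots, T$ yields the closed form for $\ell(X) = \mathbf{h}^{(T)}$.

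Once the closed form is in hand, the rank lower bound is a one-line linear-algebra argument. Restrict both the row multi-index $I = (i_1, i_3, \dots, i_{T-1})$ and the column multi-index $J = (i_2, i_4, \dots, i_T)$ of the matricization $[\boldsymbol{\Gamma}^{\ell}(\mathbb{X})]^{(s_{odd}, t_{even})}$ to tuples in $\{1, \dots, n\}^{T/2}$. By the closed form, the corresponding $n^{T/2} \times n^{T/2}$ principal submatrix is exactly $\mathbf{J}_N - \mathbf{I}_N$ with $N = n^{T/2}$, whose eigenvalues are $N-1$ (simple, eigenvector $\mathbf{1}$) and $-1$ (multiplicity $N-1$, on the orthogonal complement). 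All eigenvalues are nonzero, so this submatrix has full rank $N$; since the rank of a matrix dominates the rank of any of its submatrices, $\rank[\boldsymbol{\Gamma}^{\ell}(\mathbb{X})]^{(s_{odd}, t_{even})} \ge \min(M,R)^{T/2}$, as required.

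The main obstacle I anticipate is the careful bookkeeping in the even-step induction, specifically verifying that the factor $1 - \min(M, R)$ in $\mathbf{b}$, together with the ``clipping'' of $\mathbf{I}^{M,R}$ at index $\min(M,R)$, engineers the cancellation \emph{uniformly} across the two regimes $R \le M$ and $R > M$: in the former the clipping happens on the rank axis, in the latter on the feature axis, yet both lead to the same indicator $\mathbf{1}[i_{t-1} = i_t \le \min(M,R)]$. Once this uniform verification is in place, the remainder of the proof is just the spectral computation for $\mathbf{J}_N - \mathbf{I}_N$.
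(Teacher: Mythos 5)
Your proposal is correct and follows essentially the same route as the paper's proof: you verify that the prescribed weights make the network compute the pairwise-similarity indicator $\ell(X)=1-\prod_{k}\mathbf{1}[i_{2k-1}=i_{2k}\le\min(M,R)]$, and then lower-bound the rank of the $(s_{odd},t_{even})$ matricization. Your induction is a more explicit version of the paper's informal description of the network's behavior, and your rank step (extracting the principal $N\times N$ submatrix $\mathbf{J}_N-\mathbf{I}_N$ with $N=\min(M,R)^{T/2}$ and using its spectrum) is a clean formalization of the paper's direct claim about the rank of the all-ones-minus-diagonal matrix; aside from the degenerate case $N=1$, where the eigenvalue $N-1$ vanishes but the bound is trivial anyway, everything checks out.
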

\begin{proof}
Informal description of the network defined by weights in the statement in the lemma is the following. Given some input vector $\mathbf{e}_i$ it is first transformed into its bitwise negative $\overline{\mathbf{e}}_i$, and its first $R$ components are saved into the hidden state. The next block then measures whether the first $\min(R, M)$ components of the current input coincide with the hidden state (after again taking bitwise negative). If this is the case, the hidden state is set $0$ and the process continues. Otherwise, the hidden state is set to $1$ which then flows to the output independently of the other inputs. In other words, for all the inputs of the form $X = (\mathbf{x}_{i_1}, \mathbf{x}_{i_1}, \dots, \mathbf{x}_{i_{T/2}}, \mathbf{x}_{i_{T/2}})$ with $i_1 \leq R, \dots ,i_{T/2} \leq R$ we obtain that $\ell(X) = 0$, and in every other case $\ell(X) = 1$. Thus, we obtain that $\left[\boldsymbol{\Gamma}^{\ell}(\mathbb{X})\right]^{(s_{odd}, t_{even})}$ is a matrix with all the entries equal to $1$, except for $\min(M, R)^{\sfrac{T}{2}}$ entries on the diagonal, which are equal to $0$. Rank of such a matrix is $R^{\sfrac{T}{2}} + 1$ if $R < M$ and $M^{\sfrac{T}{2}}$ otherwise, and the statement of the lemma follows.
\end{proof}
Based on these two lemmas we immediately obtain \Cref{thm:expressivity-1}.
\begin{proof}[\rm\bf{Proof of \Cref{thm:expressivity-1}}]
Consider the example constructed in the proof of \Cref{lemma:rank-rnn}. By \Cref{lemma:rank-shallow} the rank of the shallow network with rectifier nonlinearity which is able to represent the same grid tensor is at least
$\frac{2}{TM} \min(M, R)^{\sfrac{T}{2}}$.
\end{proof}
\theoremexprii*
\begin{proof}
As before, let us denote by $\mathbf{I}^{(p, q)}$ a matrix of size $p \times q$ such that $\mathbf{I}^{(p, q)}_{ij} = \delta_{ij}$, and by $\mathbf{a}^{(p_1, p_2, \dots p_d)}$ we denote a tensor of size $p_1 \times \dots \times p_d$ with each entry being $a$ (sometimes we will omit the dimensions when they can be inferred from the context). Consider the following weight settings for a generalized RNN. 
\begin{equation*}
\begin{aligned}
& \mathbf{C}^{(t)} = \left(\mathbf{F}^{\top}\right)^{-1},\\
& \tens{G}^{(t)} = 
\begin{cases}
\mathbf{2}^{(M, 1, R)}, \ t = 1 \\
\mathbf{1}^{(M, R, R)}, \ t = 2, \dots, T - 1 \\
\mathbf{1}^{(M, R, 1)}, \ t = T
\end{cases}
\end{aligned}
\end{equation*}
The RNN defined by these weights has the property that $\boldsymbol{\Gamma}^{\ell}(\mathbb{X})$ is a constant tensor with each entry being $2(MR)^{T - 1}$, which can be trivially represented by a rank $1$ generalized shallow network. We will show that this property holds under a small perturbation of $\mathbf{C}^{(t)}, \tens{G}^{(t)}$ and $\mathbf{F}$. Let us denote each of these perturbation (and every tensor appearing size of which can be assumed indefinitely small) collectively by $\boldsymbol{\varepsilon}$. Applying \cref{eq:grid_tensor_tt} we obtain (with $\xi(x, y) = \max(x, y, 0)$).
\begin{equation*}
\begin{aligned}
& \boldsymbol{\Gamma}^{\ell, 0}(\mathbb{X}) = \mathbf{0} \in \mathbb{R}^1,\\
& \boldsymbol{\Gamma}^{\ell, 1}(\mathbb{X})_{k m_1} = \sum_{i, j} \tens{G}^{(1)}_{ijk} \left((\mathbf{I}^{(M, M)} + \boldsymbol{\varepsilon}) \otimes_{\xi} \mathbf{0} \right)_{i m_1 j} = \mathbf{1} \otimes (\mathbf{2} + \boldsymbol{\varepsilon}),\\
& \boldsymbol{\Gamma}^{\ell, 2}(\mathbb{X})_{k m_1 m_2} = \sum_{i, j} \tens{G}^{(2)}_{ijk} \left((\mathbf{I}^{(M, M)} + \boldsymbol{\varepsilon}) \otimes_{\xi} \boldsymbol{\Gamma}^{\ell, 1}(\mathbb{X}) \right)_{i m_2 j m_1} = \mathbf{1} \otimes (\mathbf{2 MR} + \boldsymbol{\varepsilon}) \otimes \mathbf{1},\\
& \cdots \\
& \boldsymbol{\Gamma}^{\ell, T}(\mathbb{X})_{k m_1 m_2 \dots m_T} = 1 \otimes (\mathbf{2 (MR)^{T - 1}} + \boldsymbol{\varepsilon}) \otimes \mathbf{1} \dots \otimes \mathbf{1} ,\\
&\boldsymbol{\Gamma}^{\ell}(\mathbb{X}) = \boldsymbol{\Gamma}^{\ell, T}(\mathbb{X})_{1, :, :, \dots, :} = (\mathbf{2 (MR)^{T - 1}} + \boldsymbol{\varepsilon}) \otimes \mathbf{1} \dots \otimes \mathbf{1},
\end{aligned}
\end{equation*}
where we have used a simple property connecting $\otimes_{\xi}$ with $\xi(x, y) = \max(x, y, 0)$ and ordinary $\otimes$: if for tensors $\tens{A}$ and $\tens{B}$ each entry of $\tens{A}$ is greater than each entry of $\tens{B}$, $\tens{A} \otimes_{\xi} \tens{B} = \tens{A} \otimes \mathbf{1}$.
The obtained grid tensors can be represented using rank $1$ generalized shallow networks with the following weight settings.
\begin{equation*}
\begin{aligned}
& \lambda = 1, \\
& \mathbf{v}_t = \begin{cases}
\mathbf{F}_{\boldsymbol{\varepsilon}}^{-1} (\mathbf{2 (MR)^{T - 1}} + \boldsymbol{\varepsilon}), \ t = 1, \\
\mathbf{0}, \ t > 1,
\end{cases}
\end{aligned}
\end{equation*}
where $\mathbf{F}_{\boldsymbol{\varepsilon}}$ is the feature matrix of the corresponding perturbed network.
\end{proof}

\section{Additional experiments}{\label{sec:appendix-exp}}

In this section we provide the results additional computational experiments, aimed to provide more thorough and complete analysis of generalized RNNs.

\paragraph{Different $\boldsymbol{\xi}$-nonlinearities} In this paper we presented theoretical analysis of rectifier nonlinearity which corresponds to $\xi(x, y) = \max(x, y, 0)$. However, there is a number of other associative binary operators $\xi$ which can be incorporated in generalized tensor networks. Strictly speaking, every one of them has to be carefully explored theoretically in order to speak about their generality and expressive power, but for now we can compare them empirically.

\begin{table}[h!]
\begin{center}
\begin{tabular}{ c c c c c c}
 \toprule
 $\xi(x,y)$ & $xy$ & $\max(x,y,0)$ & $\ln \left( e^x + e^y \right)$ & $x + y$ & $\sqrt{x^2 + y^2}$\\ 
 \midrule
 MNIST & $97.39$ & $97.45$ & $\mathbf{97.68}$ & $96.28$ & $96.44$\\  
 CIFAR-10 & $43.08$ & $48.09$ & $55.37$ & $\mathbf{57.18}$ & $49.04$\\
 IMDB & $83.33$ & $\mathbf{84.35}$ & $82.25$ & $81.28$ & $79.76$ \\
 \bottomrule
\end{tabular}
\end{center}
\caption{Performance of generalized RNN with various nonlinearities.}
\label{table:different_xi}
\end{table}

\Cref{table:different_xi} shows the performance (accuracy on test data) of different nonlinearities on MNIST, \mbox{CIFAR---10}, and IMDB datasets for classification. Although these problems are not considered hard to solve, we see that the right choice of nonlinearity can lead to a significant boost in performance. For the experiments on the visual datasets we used $T=16, m=32, R=64$ and for the experiments on the IMDB dataset we had $T=100, m=50, R=50$. Parameters of all networks were optimized using Adam (learning rate $\alpha=10^{-4}$) and batch size $250$.

\paragraph{Expressivity in the case of shared cores} We repeat the expressivity experiments from Section~\ref{sec:experiments} in the case of equal TT--cores ($\tens{G}^{(2)} = \dots = \tens{G}^{(T-1)}$). We observe that similar to the case of different cores, there always exist rank $1$ generalized shallow networks which realize the same score function as generalized RNN of higher rank, however, this situation seems too unlikely for big values of $R$.

\begin{figure}[ht!]
\centering
\begin{minipage}{.48\textwidth}
  \centering
  \includegraphics[width=\linewidth]{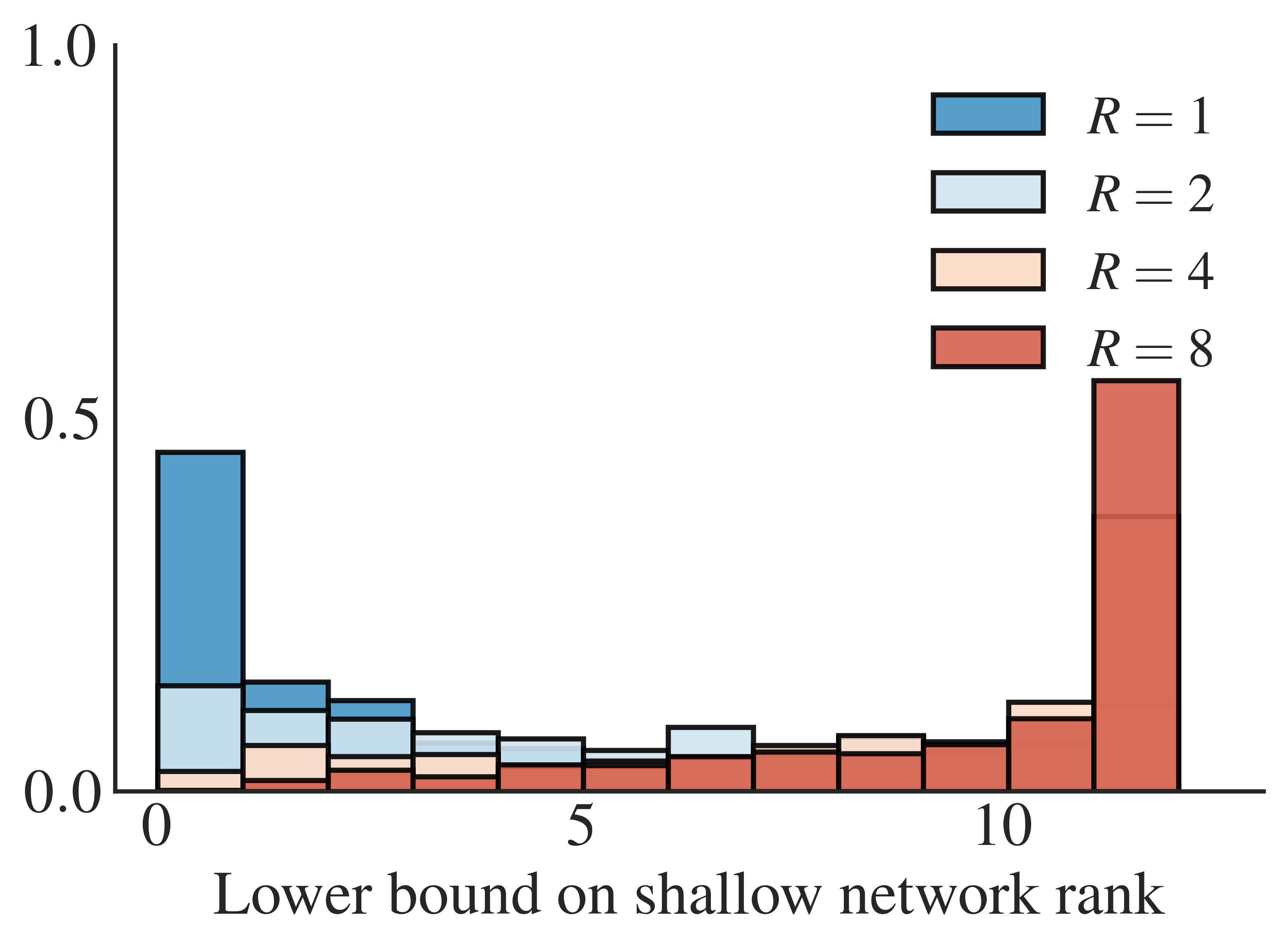}
  \captionof{figure}{Distribution of lower bounds on the rank of generalized shallow networks equivalent to randomly generated generalized RNNs of ranks ($M=6$, $T=6$, $\xi(x,y)=\max(x,y,0)$).}
  \label{fig:hist_ranks_equal_relu}
\end{minipage}%
\hfill
\begin{minipage}{.48\textwidth}
  \centering
  \includegraphics[width=\linewidth]{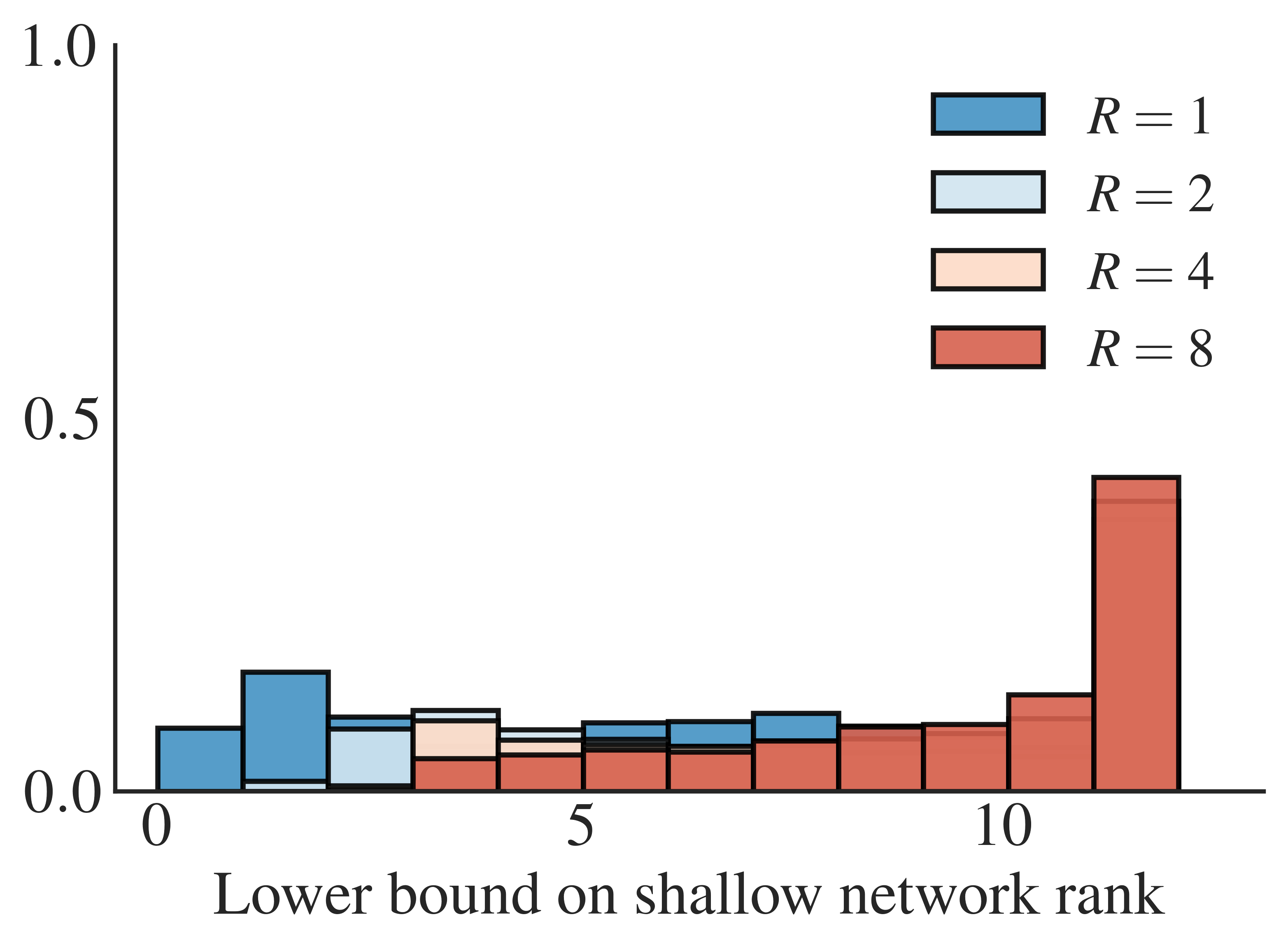}
  \captionof{figure}{Distribution of lower bounds on the rank of generalized shallow networks equivalent to randomly generated generalized RNNs of ranks ($M=6$, $T=6$, $\xi(x,y)=\sqrt{x^2+y^2}$).}
  \label{fig:hist_ranks_equal_euclid}
\end{minipage}
\end{figure}

\end{document}